\newcommand{\distr}[1]{\mathbb D(#1)}
\newcommand{\prob}[3]{\text{prob}_{#1}^{#2}\left(#3\right)}
\newcommand{\Shielded}[1]{\text{Sh}^{\leq p}_{\beta}\left(#1\right)}
\newcommand{\Shieldedtwo}[2]{\text{Sh}^{\leq p}_{#2}\left(#1\right)}
\newcommand{\first}[1]{\text{first}\left(#1\right)}
\newcommand{\last}[1]{\text{last}\left(#1\right)}
\newcommand{\addmemory}[1]{\widehat{#1}}
\newcommand{\paths}[1]{\text{Paths}\left(#1\right)}
\newcommand{\expect}[4]{\mathbb E^{#3}_{#1,#2}\left(#4\right)}
\newcommand{\Bellman}[1]{\mathcal B_{#1}}
\newcommand{\encoded}[1]{\text{En}_{\beta}^{\leq p}(#1)}
\newcommand{\Shielding}[1]{\overline{#1}}
\newcommand{\setbetas}[1]{\mathcal B\left(#1\right)}
\newcommand{\infnorm}[1]{\left\lVert#1\right\rVert_{\infty}}
\newcommand{\reach}[1]{\mbox{Reach}\left(#1\right)}
\newtheorem{theorem}{Theorem}
\newtheorem{definition}{Definition}
\newtheorem{assumption}{Assumption}
\newtheorem{corollary}{Corollary}
\title{Probabilistic Shielding for Safe Reinforcement Learning}
\author{
    Edwin Hamel-De le Court,
    Francesco Belardinelli,
    Alexander W. Goodall\\
}
\begin{document}

\maketitle

\begin{abstract}
In real-life scenarios, a Reinforcement Learning (RL) agent aiming to maximise their reward, must often also behave in a safe manner, including at training time. Thus, much attention in recent years has been given to Safe RL, where an agent aims to learn an optimal policy among all policies that satisfy a given safety constraint. However, strict safety guarantees are often provided through approaches based on linear programming, and thus have limited scaling. In this paper we present a new, scalable method, which enjoys strict formal guarantees for Safe RL, in the case where the safety dynamics of the Markov Decision Process (MDP) are known, and safety is defined as an undiscounted probabilistic avoidance property. Our approach is based on state-augmentation of the MDP, and on the design of a shield that restricts the actions available to the agent. We show that our approach provides a strict formal safety guarantee  that the agent stays safe at training and test time. Furthermore, we demonstrate that our approach is viable in practice through experimental evaluation.
\end{abstract}

\section{Introduction}
Reinforcement Learning (RL) aims to optimise the behaviour of an agent in an unknown environment. Much attention has been given to RL in recent years, because of its many practical applications, which include for example playing games \cite{DBLP:journals/corr/MnihKSGAWR13} or robotics \cite{SurveyRobotics}. In many of these applications however, safety, either for the agent or for other humans, is critical. Consequently, Safe RL has been developed, where an agent has to maximise its cumulative expected reward subject to a constraint. Even though much progress has been made to provide safety guarantees during training and at test time, approaches providing strict safety guarantees still rely on Linear Programming \cite{10.1145/800057.808695}, which is known to lack scalability.

\paragraph{Contributions.}
We design a new shielding approach for finding a policy that maximizes cumulative reward in a finite MDP with known safety dynamics while guaranteeing safety throughout the whole learning phase. We consider MDPs where some states are labelled unsafe, and the safety we consider consists in avoiding those unsafe states with at least some probability $p$. This framework comes from a probabilistic version of what is usually called the ``safety" fragment of Linear Temporal Logic (LTL) \cite{ABENTShielding} \cite{DBLP:conf/concur/0001KJSB20}. Although, for simplicity's sake, we consider only a subset of that safety fragment, \emph{i.e.} safety definable by state-avoidance, it is possible to reduce the whole fragment to that subset with the usual trick of making a product between the automaton representing the LTL property and the MDP \cite{ABENTShielding} \cite{DBLP:conf/concur/0001KJSB20}. Thus, the model we use can capture many real-life scenarios, like a robot's task of reaching a goal position while avoiding objects on his path.

Our approach is, to our knowledge, in the framework we consider, the first approach not based on Linear Programming \cite{10.1145/800057.808695} that gives \emph{strict formal guarantees} for safety throughout learning and at test time. Instead of using Linear Programming, that has limited scalability \cite{10.5555/3312046}, we leverage sound Value Iteration algorithms (see e.g. \cite{HADDAD2018111,DBLP:conf/cav/QuatmannK18,DBLP:conf/cav/HartmannsK20}), which are algorithms that improve on Value Iteration in order to give formal approximation guarantees.
We use the values obtained by Value Iteration over safety costs to construct a shield that makes the agent's exploration of the MDP safe by constraining its actions. Constructing a shield is a well-known approach to solve constrained RL \cite{ABENTShielding,DBLP:conf/concur/0001KJSB20,DBLP:conf/atal/Elsayed-AlyBAET21,DBLP:conf/ijcai/YangMRR23}. However, in contrast to shields defined in previous papers, the shield we construct does not directly constrain the actions of the agent in the original MDP, but constrains the actions of the agent in a safety-aware state-augmented MDP. This approach allows for preserving optimality while ensuring safety in a probabilistic context. Any RL algorithm can then be used to solve the shielded MDP, such as PPO \cite{SPPO}, A2C \cite{DBLP:journals/corr/MnihBMGLHSK16}, etc. Once the shield is constructed, it is used to train the agent to maximize its cumulative reward, with no constraint violations incurred.
To summarize, the contributions of the paper are as follows.
\begin{enumerate}
    \item We design a shield for a finite MDP as a safety-aware state-augmention of that MDP using only its safety dynamics.
    \item We show that the shield makes the agent's exploration safe.
    \item We show that finding an optimal policy among all safe policies reduces to finding an optimal policy in the shield.
    \item We provide a practical way of implementing a shield as a gym environment.
\end{enumerate}
The paper is organized as follows. The preliminaries introduce the mathematical background and notations needed for the paper. Then, we introduce the problem considered, give an overview of our approach, formally define the shield and show that it is safe and optimality-preserving. Finally, we discuss a practical way to implement the shield.

\subsection{Related Work}
Safe Reinforcement Learning has gathered much attention in recent years, and several approaches have been proposed. A comprehensive survey can be found in \cite{DBLP:journals/corr/abs-2205-10330}. Policy-based approaches are arguably the most popular approaches in Safe RL. They usually consist in extending a known RL algorithm like PPO \cite{SPPO}, TRPO \cite{DBLP:journals/corr/SchulmanLMJA15}, or SAC \cite{HSAC} to Safe RL using a lagrangian \cite{Achiam2019BenchmarkingSE}, or in enforcing the constraint by changing the objective function \cite{DBLP:conf/aaai/LiuDL20} or modifying the update process \cite{DBLP:journals/corr/abs-2002-06506}, without introducing a dual variable. Some of these algorithms have become widely used for benchmarking against other new Safe RL algorithms, and are implemented in several state-of-the-art Safe RL frameworks \cite{Achiam2019BenchmarkingSE,ji2023safetygymnasiumunifiedsafereinforcement,ji2023omnisafeinfrastructureacceleratingsafe}. We introduce in the following other, more specific, approaches that relate to ours.

\paragraph{Shielding.}
A \emph{shield} is a system that restricts the action of the agent during learning and at test time, to ensure its safety \cite{ABENTShielding,DBLP:conf/concur/0001KJSB20,DBLP:conf/atal/Elsayed-AlyBAET21,DBLP:conf/ijcai/YangMRR23}. Shielding was introduced in \cite{ABENTShielding}, where safety was defined as a formula of the "safety" fragment of LTL. This paper can be considered an extension of \cite{ABENTShielding} to the probabilistic setting, since in the case where the agent must be safe with probability $1$, the shield we introduce is almost the same as the one defined in \cite{ABENTShielding}. In \cite{DBLP:conf/concur/0001KJSB20}, the authors already also extend \cite{ABENTShielding} to the probabilistic case, but no formal guarantees are provided for the safety of the policy, and our respective methods are significantly different. In particular, their approach may indeed violate safety in practice. A comprehensive survey focused on shielding can be found in \cite{10039301}.

\paragraph{Linear Programming-based approaches.}
It is well-known that Safe RL can be solved by Linear Programming under certain assumptions \cite{Altman1999ConstrainedMD}. Several recent works have leveraged Linear Programming to provide statistical guarantees in the model-based case, when the dynamics of the MDP are learned in various contexts. For example, \cite{DBLP:journals/corr/abs-2003-02189} and \cite{DBLP:conf/nips/BuraHKSC22} provide algorithms for discounted cumulative rewards and costs, while \cite{DBLP:journals/corr/abs-2403-15928} provides an algorithm in the case where safety is defined by a reach-avoid undiscounted property. 

\paragraph{State augmentation techniques.}
State-augmentation techniques with a number representing how far the agent is from being unsafe have also been studied. In \cite{DBLP:journals/corr/abs-2102-11941}, every state of the MDP is augmented with a Lagrange multiplier. In \cite{DBLP:conf/icml/SootlaCJWMWA22}, the states are augmented with the accumulated safety cost up to that state, and are used to reshape the objective. In \cite{DBLP:conf/atal/YangJTF24}, states are also augmented with the accumulated safety cost up to that state, and non-stationary policies depending on that cost are considered. However, in contrast to the aforementioned papers, the agent in our approach \emph{is able to choose} the maximal accumulated safety cost he is able to use in the future depending on the state it goes to. Thus, we do not need to use the augmentation to change the objective function, and are able to provide stricter optimality-preserving and safety guarantees.

\section{Preliminaries}
We introduce in this section the mathematical prerequisites necessary for the paper.

\subsection{Constrained Reinforcement Learning}
\subsubsection{Markov Decision Processes.}
A \textit{Markov Decision Process (MDP)} is a tuple $\mathcal
M=\langle S,A,P,s_{init}, \allowbreak AP, L, R\rangle$, where $S$ is a
set of \textit{states}; $A$ is a mapping that associates every state
$s\in S$ to a nonempty finite set of \emph{actions} $A(s)$; $P$ is a
\textit{transition probability function} that maps every state-action
pair $(s,a)$ to a probability
measure over $S$; $s_{init}\in S$ is the \textit{initial state}\footnote{This can be assumed wlog compared to a model with an \emph{initial probability distribution} since it is always possible to add a new initial state to such a model with an action from this initial state whose associated probability distribution is the aforementioned initial probability distribution.};
$AP$ is a set of \emph{atomic propositions} (or atoms); $L:S\mapsto2^{AP}$ is a
\emph{labeling function}; and $R:S\mapsto \mathbb R$ is the
\emph{reward function}. For the sake of simplicity, we may write
$P(s,a,s')$ instead of $P(s,a)(s')$. An MDP is finite if the sets of
states and actions are finite.

A finite (resp.~infinite) {\em path} in $\mathcal M$ is a finite
(resp.~infinite) word $\zeta=s_0 a_0 \cdots s_{n-1}a_{n-1}s_n$
(resp.~$\zeta=s_0 a_0\cdots s_n a_n\cdots$) such that
$s_0=s_{init}$, and such that for any positive integer $i\leq n$ (resp. for any positive integer $i$), $s_i$ is a state of $\mathcal M$, $a_{i-1}$ is an action in $A(s_{i-1})$, and $s_i$ is in the support of $P(s_{i-1},a_{i-1})$. In addition, for any finite path $\zeta=s_0 a_0 \cdots s_{n-1}a_{n-1}s_n$ of $\mathcal M$, we let
$\first{\zeta}=s_0$, we let $\last{\zeta}=s_n$, and we let $\paths{\mathcal M}$ denote the set
of {\em infinite} paths of $\mathcal M$. A {\em policy} $\pi$ of $\mathcal M$ is a
mapping that associates any finite path $\zeta$
of $\mathcal M$ to an element of $\distr{A(\last{\zeta})}$, where $\distr{E}$ is the set of all probability measures over $E$. It is memoryless if $\pi(\zeta)$ only depends on $\last{\zeta}$. It is deterministic if for any finite path $\zeta$ of $\mathcal M$, $\pi(\zeta)$ is a Dirac measure. For any policy
$\pi$ of $\mathcal M$, for any state $s\in S$, we let $\mathcal
M^s_\pi$ denote the {\em Markov chain} induced by $\pi$ in $\mathcal M$
starting from state $s$, we let $\mathcal M_\pi$ denote
$\mathcal M^{s_{init}}_\pi$, and we let $P_\pi$ denote the transition
function of $\mathcal M_\pi$. We denote the usual probability measure
induced by the Markov chain $\mathcal M^s_\pi$ on $\paths{\mathcal M}$
by $\text{prob}^{s}_{\mathcal M,\pi}$. For more details on MDPs and induced Markov chains, see \cite{baier2008principles,BSSstochastic}.

\paragraph{Reinforcement Learning.}

For any random variable
$X:\paths {\mathcal M}\mapsto \mathbb R$, we let $\expect{\mathcal
  M}{\pi}{s}{X}$ denote the expectation of $X$ with respect to the
probability measure $\text{prob}^{s}_{\mathcal M,\pi}$, and we let
$\expect{\mathcal M}{\pi}{}{X}$ denote $\expect{\mathcal
  M}{\pi}{s_{init}}{X}$. In addition, when there is no ambiguity, we
usually write $\expect{\mathcal M}{\pi}{s}{\bullet}$ for
$\expect{\mathcal M}{\pi}{s}{s_0 a_0\cdots s_{n} a_n \cdots \mapsto \bullet}$.

In Reinforcement Learning, we usually solve the following the problem: given an MDP $\mathcal M$, and a discount factor $0<\gamma<1$, find a policy $\pi^\star$ such that $J(\pi^\star)=\max_{\pi}(J(\pi))$, where $$J(\pi)=\expect{\mathcal M}{\pi}{s}{\sum_{t\in\mathbb N}
  \gamma^t R(s_t)}.$$
In recent years, many algorithms have been proposed to solve the above problem. Proximal Policy Optimization (PPO) \cite{SPPO}, Asynchronous Advantage Actor Critic (A3C) \cite{DBLP:journals/corr/MnihBMGLHSK16}, or Soft Actor-Critic (SAC) \cite{HSAC}, are among the most popular, and we leverage these algorithms in our experiments.

\paragraph{Probabilistic Reachability Goals.}

In contrast to discounted objectives, reachability goals are a simple form of undiscounted and infinite horizon objectives, that we use as a constraint for MDPs. For any MDP $\mathcal M$, any state $s$ and policy $\pi$ of $\mathcal M$, any finite path $\zeta=s_0 a_0\cdots s_{n-1} a_{n-1} s_n$ (resp. infinite path $\zeta=s_0 a_0\cdots s_{n} a_n \cdots$) in $\mathcal M$, we write $\zeta\models\reach{\textbf c}$ if there exists $i\in\leq n$ (resp. $i\in\mathbb N$) such that $L(s_i)=\textbf c$. Then, $\mathcal M_\pi\models \mathbb P_{\leq p}\left(\reach{ \textbf c}\right)$ when the property $\text{prob}^{s}_{\mathcal M,\pi}\{\zeta\in\paths{\mathcal M^s_\pi} \mid \zeta\models\reach{\textbf c}\}\leq p$ is true.

\section{Probabilistic Shielding}
In this section, we introduce a new theoretical framework for probabilistic shielding, and show that it gives safety and optimality guarantees.

\subsection{Problem Statement}

We assume in the rest of the paper that all the labelling functions of the MDPs considered take values in $\{\textbf s,\textbf u\}$, where \emph{safe} states are labelled by $\textbf s$ and \emph{unsafe} states are labelled by $\textbf u$.

\begin{definition}[Reachability-Constrained Optimization Problem (RCOP)] Given a finite MDP $\mathcal M$, a safety threshold $0\leq p \leq 1$, and a discount factor $0<\gamma<1$, find a policy
$\pi^\star$ such that $\mathcal M_{\pi^\star}\models \mathbb P_{\leq p}(\reach{\textbf u})$
and such that $\pi^\star$ is optimal among all policies $\pi$ satisfying $\mathcal M_\pi\models \mathbb P_{\leq p}(\reach{\textbf u})$, \emph{i.e.,} such that
$$J(\pi^\star)=\max_{\{\pi\mid \mathcal M_{\pi}\models \mathbb P_{\leq p}\left(\reach{\textbf u}\right)\}}(J(\pi)).$$
\end{definition}
The above problem is a form of generalisation of the problem considered in \cite{ABENTShielding}. Furthermore, variants of the above problem, \emph{i.e} RL with infinite-horizon undiscounted safety properties, are considered in many books and papers (see for example \cite{BSSstochastic, Altman1999ConstrainedMD, DBLP:conf/concur/0001KJSB20, DBLP:journals/corr/abs-2403-15928, DBLP:conf/atal/MqirmiBL21}). However, a significant difference between the problem we consider and most Safe RL approaches is that, similarly to \cite{DBLP:conf/atal/YangJTF24}, we make the choice of including non-stationary policies in the problem. We make this choice because in our context, where the discount factor of the reward (which is less than $1$) and the discount factor of the constraint cost (which is equal to $1$) are not equal, optimal memoryless policies are not guaranteed to exist \cite{Altman1999ConstrainedMD}.

\subsection{Method Overview}


In order to tackle RCOP, we compute, for all states of the MDP, an approximation of the minimal probability of reaching, from that state, an unsafe state. More precisely, we define $\beta_{\mathcal M}$ as the mapping such that for every state $s$ of the MDP $\mathcal M$, $\beta_{\mathcal M}(s)$ is equal to $$\min_{\pi} \text{prob}^{s}_{\mathcal M,\pi}\{\zeta\in\paths{\mathcal M^s_\pi} \mid \zeta\models \reach{\textbf u}\}.$$ 
The mapping $\beta_{\mathcal M}$ is the smallest fixed point of the following equation \cite{baier2008principles},
$$\beta(s)=\left\{\begin{array}{lc}
    1 & \text{if } L(s)=\textbf u \\
    \left(\Bellman{\mathcal M}(\beta)\right)(s)& \text{otherwise,}
\end{array} \right. $$
where $\left(\Bellman{\mathcal M}(\beta)\right)(s)=\min_{a\in A} \sum_{s'\in S} P(s,a,s') \beta(s')$.

This fixed point can be computed with linear programming
\cite{Forejt2011} in 
polynomial time 
\cite{10.1145/800057.808695}. 
In
practice, this approach is inefficient and state-of-the-art methods
rely on \emph{value iteration} (VI), i.e., iterating the operator
$\Bellman{\mathcal M}$ from $\beta_0$ such
that $\beta_0(s)=1$ if $L(s)=\textbf u$, and $\beta_0(s)=0$ otherwise
\cite{10.5555/3312046} to compute an approximation of $\beta_{\mathcal M}$. However, VI might not yield a good approximation of $\beta_{\mathcal M}$ if stopped prematurely and only gives a
lower bound on $\beta_{\mathcal M}$,
whereas an upper bound is needed to
provide safety guarantees in our approach.
\begin{definition}
    For any MDP $\mathcal M$, and any $\epsilon\geq 0$, an \emph{inductive $\epsilon$-upper bound} of $\beta_{\mathcal M}$ is a mapping $\beta$ that associates to any state $s$ of $\mathcal M$ a number in $[0;1]$ such that for all states $s$, $0\leq \beta(s)-\beta_{\mathcal M}(s)\leq \epsilon$, and $\left(\Bellman{\mathcal M}(\beta)\right)(s)\leq \beta(s)$.
\end{definition}

The first step of our approach thus consists in computing an inductive $\epsilon$-upper bound $\beta$ of $\beta_{\mathcal M}$, with a small $\epsilon$. To our knowledge, the fastest algorithms for that purpose in the general case are Interval Iteration \cite{HADDAD2018111}, Sound Value Iteration \cite{DBLP:conf/cav/QuatmannK18} and Optimistic Value Iteration \cite{DBLP:conf/cav/HartmannsK20}, with no clear overall faster one \cite{DBLP:conf/cav/HartmannsK20}.

Once $\beta$ is computed, we construct a shield $\Shielded{\mathcal M}$ by augmenting every state of the MDP $\mathcal M$ with a real number in $[0;1]$, that is, a ``safety level" representing intuitively a maximal probability of reaching an unsafe state from the current state while following \emph{any} 
actions. Thus, any action $(a,\alpha)$ taken in $\Shielded{\mathcal M}$ is composed of an action $a$ of $\mathcal M$, together with predictions $\alpha \in [0;1]$, that may depend on the current ``safety level", as to what the next ``safety levels" will be. Furthermore, $\Shielded{\mathcal M}$ is defined so that:
\begin{enumerate}
    \item The predictions must be coherent, \emph{i.e} that the sum of the next ``safety levels" as predicted by $\alpha$, pondered with the probabilities given by action $a$, is less than or equal to the current ``safety level". \label{condition-beta}
    \item The predicted ``safety levels" cannot be less than the inductive $\epsilon$-upper bound $\beta$ of the minimal probability $\beta_{\mathcal M}(s)$ of reaching in $\mathcal M$ an unsafe state from the current state $s$ \label{safety-level-condition}.
\end{enumerate}

Finally, we learn a policy in the constructed shield. 
\subsection{The Shield: Safety and Optimality Guarantees}
We now give a formal definition of the shield used in our approach, and justify that our approach is theoretically sound. In the following, we let $\mathcal M$ be an MDP,  $\gamma\in [0;1]$ be a discount factor, $p\in [0;1]$, $\epsilon\in\mathbb R^+$, and we let $\beta$ be an inductive $\epsilon$-upper bound of $\beta_{\mathcal M}$ such that $\beta(s_{init}) \leq p$. Notice that if such a $\beta$ does not exist, RCOP is unfeasible.
Moreover, for any $s\in S$, we 
let $\chi^s$ denote the polytope in $\mathbb R^{A(s)}$ representing probability distributions over $A(s)$, \emph{i.e.} the set of all $x\in \mathbb R^{A(s)}$ such that $\displaystyle\sum_{a\in A(s)} x_a =1$ and $x_a\geq 0$ for any $a\in A(s)$, and for any $s\in S$, for any mapping $\alpha$ from $S$ to $[0;1]$, any $q \in [\beta(s);1]$, we let $A^{s,q}_\alpha$ denote the half-space representing Condition \ref{safety-level-condition} above with 
$s$ being the current state and $q$ being the current``safety level", \emph{i.e.} we let $A^{s,q}_\alpha$ denote the set of all $x\in \mathbb R^{A(s)}$ such that $$\displaystyle\sum_{a\in A(s)} x_a \left(q-\sum_{s'\in S}P(s,a,s')\alpha(s')\right)\geq 0.$$
Finally, we let $C_\alpha^{s,q}$ be the polytope of probability distributions satisfying Condition \ref{safety-level-condition}, \emph{i.e.} the polytope defined by $\chi^s\cap A^{s,q}_\alpha$, we let $V^{s,q}_\alpha$ be the (finite) set of vertices of $C_\alpha^{s,q}$, and we let $X=\prod_{s\in S}[\beta(s);1]$.

\begin{definition}[The Shield]\label{def-shield}
We let $\Shielded{\mathcal M}$ be the MDP $\mathcal M'$ with
\begin{itemize}
\item set of states $S'= \{(s, q) \mid s \in S, q \in [\beta(s);1]\}$;
\item sets of actions $A'(s,q)=\bigcup_{\alpha\in X}\bigcup_{v\in V^{s,q}_\alpha} (\alpha,v)$;
\item initial state $s'_{init}=(s_{init},p)$;
\item labelling $L'(s,x)=L(s)$;
\item reward $R'(s,x)=R(s)$;
\item transition probability function $P'$ such that for any $(s,q)\in S'$, any $\alpha\in X$, and any $v\in V^{s,q}_\alpha$, $P'((s,q),(\alpha,v)))$ is equal to $$\sum_{s'\in S} \delta_{(s',\alpha(s))}\left(\sum_{a\in A(s)} v_a P(s,a,s') \right)$$
where $\delta_{(s',\alpha(s'))}$ is the Dirac measure on $S'$ with support $\{(s',\alpha(s'))\}$.
\end{itemize}
\end{definition}

In the above definition, $\alpha$ corresponds to the "safety levels" of the next states chosen by the agent, the definition of $X$ guarantees that Condition \ref{safety-level-condition} is satisfied, and the polytope $C^{s,q}_\alpha$ corresponds to all combinations of actions that satisfy Condition \ref{condition-beta}.
Notice that $\Shielded{\mathcal M}$ is indeed an MDP since for every $(s,q)\in S'$, $V_\beta^{s,q}$ is nonempty because $\beta$ is inductive, \emph{i.e.} because $\Bellman{\mathcal M}(\beta)\leq \beta$.

We now show that any memoryless policy in the shield is safe. The proof of the following theorem is 
inspired by the proof of Theorem 10.15 in \cite{baier2008principles}.

\begin{theorem}[Safety guarantee in any shield]\label{theorem-safety}
    For any memoryless policy $\pi$ in $\Shielded{\mathcal M}$, we have $$\Shielded{\mathcal M}_{\pi}\models \mathbb P_{\leq p}\left( \reach{\textbf u}\right).$$
\end{theorem}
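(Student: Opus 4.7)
The plan is to exhibit an explicit supermartingale-type witness on the augmented state space. Define $h : S' \to [0,1]$ by $h(s,q) = q$, and let $f(s,q) := \text{prob}^{(s,q)}_{\Shielded{\mathcal M},\pi}\{\zeta \models \reach{\textbf u}\}$. Since the initial state of $\Shielded{\mathcal M}$ is $s'_{init}=(s_{init},p)$, the desired conclusion $\Shielded{\mathcal M}_\pi \models \mathbb P_{\leq p}(\reach{\textbf u})$ is exactly $f(s_{init},p) \leq p = h(s'_{init})$. So it suffices to prove $f \leq h$ pointwise.

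The standard characterization (Theorem 10.15 of \cite{baier2008principles}) gives that $f$ is the least fixed point of the Bellman operator $\mathcal T_\pi$ for reachability of $\textbf u$-labelled states in the Markov chain $\Shielded{\mathcal M}_\pi$. By Knaster--Tarski, it therefore suffices to show that $h$ is a pre-fixed point, i.e.\ $\mathcal T_\pi h \leq h$. There are two cases. If $L(s) = \textbf u$, then $\beta_{\mathcal M}(s) = 1$ and since $\beta$ is an inductive $\epsilon$-upper bound with $\beta(s) \geq \beta_{\mathcal M}(s)$ and $\beta(s) \leq 1$, we get $\beta(s) = 1$; hence $q \in [\beta(s);1] = \{1\}$, so $h(s,q) = 1 = (\mathcal T_\pi h)(s,q)$.

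For the more interesting case $L(s) \neq \textbf u$, I unfold the transition function of Definition \ref{def-shield}. For any action $(\alpha,v) \in A'(s,q)$, the successor state is deterministically $(s',\alpha(s'))$ when the underlying MDP transitions to $s'$, so the expectation of $h$ after one step under $(\alpha,v)$ is
$$\sum_{s' \in S} \alpha(s') \sum_{a \in A(s)} v_a\, P(s,a,s') = \sum_{a \in A(s)} v_a \sum_{s' \in S} P(s,a,s')\, \alpha(s').$$
Because $v \in V^{s,q}_\alpha \subseteq C^{s,q}_\alpha \subseteq A^{s,q}_\alpha \cap \chi^s$, the defining inequality of $A^{s,q}_\alpha$ together with $\sum_a v_a = 1$ yields $\sum_a v_a \sum_{s'} P(s,a,s')\,\alpha(s') \leq q$. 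Averaging over $\pi(s,q)$ preserves this bound, so $(\mathcal T_\pi h)(s,q) \leq q = h(s,q)$, and $f \leq h$ follows.

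The main obstacle, and the only place any real work happens, is correctly unfolding the transition probability $P'$ of the shield and recognising that the next ``safety level'' is set \emph{deterministically} to $\alpha(s')$ upon landing in $s'$; once this is in hand, the crucial inequality is literally the half-space condition that was built into $A^{s,q}_\alpha$ by design. Conceptually, the argument is that the shield's state augmentation encodes a running upper bound on the future unsafe-reach probability, and the vertex/polytope construction forces every legal action to be a one-step contraction of this upper bound in expectation, which is exactly the pre-fixed-point property needed to compare $h$ to the least fixed point $f$.
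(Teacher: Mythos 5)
Your proposal is correct and is essentially the paper's own argument: the paper also takes the witness $\eta(s,q)=q$, verifies the pre-fixed-point inequality $\mathbf{\Gamma}(\eta)\leq\eta$ from the half-space condition built into $A^{s,q}_\alpha$, and concludes by comparing with the least fixed point (which it constructs explicitly as the monotone limit of the iterates $\omega^n$ from $0$, rather than citing Knaster--Tarski directly). Your explicit handling of the case $L(s)=\mathbf u$ via $\beta(s)=1$ is a minor presentational difference, not a different route.
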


We now justify that we can use an optimal policy of the shield to find a policy of the original MDP that is safe, and whose expected cumulative reward is close to a solution of RCOP. The closer $\beta$ is to $\beta_{\mathcal M}$, the closer the expected cumulative reward of the policy obtained from our approach will be to the expected cumulative reward of a solution of RCOP. 

For any memoryless policy $\pi$ of $\Shielded{\mathcal M}$, we let $\addmemory{\pi}$ denote the policy of $\mathcal M$ such that $\addmemory{\pi}(s_0\cdots s_n)=\mu_{n}$ where $s_0=s_{init}$, $q_0=p$, $(\alpha^{i+1},v^{i+1})=\pi(s_i,q_i)$, $q_{i+1}=\alpha^{i+1}_{s_{i+1}}$, and $\mu_n(a)=v^{n+1}_a$. It is easy to see that $J(\addmemory{\pi})=J(\pi)$ and that 
\begin{multline*}
\prob{\mathcal M,\addmemory{\pi}}{s_{init}}{\zeta \in \paths{\mathcal M}\mid \zeta\models \reach{\textbf u}}=\\\mbox{prob}_{\Shielded{\mathcal M},\pi}^{(s_{init},p)}\Big(\zeta \in \paths{\Shielded{\mathcal M}}\\\mid \zeta\models \reach{\textbf u}\Big).
\end{multline*}
Thus, as a corollary of Theorem \ref{theorem-safety}, if $\pi$ is a memoryless policy of $\Shielded{\mathcal M}$, $\addmemory{\pi}$ is safe.
\begin{corollary}[Safety guarantee in the original MDP]\label{cor-safety}
    If $\pi$ is a memoryless policy of $\Shielded{\mathcal M}$, then 
    $$\mathcal M_{\addmemory{\pi}}\models \mathbb P_{\leq p}\left( \reach{\textbf u}\right).$$
\end{corollary}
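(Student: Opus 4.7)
The plan is to treat this as a direct corollary: almost all the work has been packaged into Theorem~\ref{theorem-safety} and into the two identities stated immediately before the corollary, so the proof just needs to assemble these pieces.

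First I would invoke Theorem~\ref{theorem-safety} applied to the memoryless policy $\pi$ in $\Shielded{\mathcal M}$. This gives $\Shielded{\mathcal M}_{\pi}\models \mathbb P_{\leq p}(\reach{\textbf u})$, i.e.,
\[
\mbox{prob}_{\Shielded{\mathcal M},\pi}^{(s_{init},p)}\!\bigl\{\zeta \in \paths{\Shielded{\mathcal M}}\mid \zeta\models \reach{\textbf u}\bigr\}\leq p.
\]
Next I would use the identity
\[
\prob{\mathcal M,\addmemory{\pi}}{s_{init}}{\zeta \in \paths{\mathcal M}\mid \zeta\models \reach{\textbf u}}
=\mbox{prob}_{\Shielded{\mathcal M},\pi}^{(s_{init},p)}\!\bigl\{\zeta\models\reach{\textbf u}\bigr\},
\]
stated just before the corollary, to transport the inequality back to $\mathcal M$ under $\addmemory{\pi}$, yielding $\mathcal M_{\addmemory{\pi}}\models \mathbb P_{\leq p}(\reach{\textbf u})$ as required.

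The only nontrivial thing to verify is the probability identity itself, which the paper asserts as ``easy to see''. If I had to justify it rigorously, I would proceed by induction on path prefixes: define the path bijection $\zeta=s_0 a_0 s_1 a_1\cdots\ \mapsto\ \trpath{\zeta}=(s_0,q_0)(\alpha^1,v^1)(s_1,q_1)(\alpha^2,v^2)\cdots$, where the $q_i$ and $(\alpha^{i+1},v^{i+1})$ are the ones chosen by the construction of $\addmemory{\pi}$, and check step by step that the probability of the cylinder set determined by any prefix of $\zeta$ under $\prob{\mathcal M,\addmemory{\pi}}{s_{init}}{\cdot}$ equals the probability of the cylinder set determined by $\trpath{\zeta}$ under $\mbox{prob}_{\Shielded{\mathcal M},\pi}^{(s_{init},p)}$; this follows directly from the definition of $P'$ in Definition~\ref{def-shield}, which matches a transition from $(s,q)$ to $(s',\alpha(s'))$ with probability $\sum_{a} v_a\, P(s,a,s')$, together with the definition of $\addmemory{\pi}$ assigning the mixed action $v^{n+1}$ at step $n$. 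Carathéodory's extension theorem then extends the equality from cylinder sets to all measurable events, in particular to $\reach{\textbf u}$, which depends only on the projection onto the $S$-component and is therefore invariant under $\trpath{\cdot}$.

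The main obstacle, if any, is the bookkeeping in that inductive identity between the two probability measures, since the shield's state $(s,q)$ carries memory (the chosen safety budget $q$) that does not appear in $\mathcal M$, and one must check that the projection $(s,q)\mapsto s$ really does push the measure on $\paths{\Shielded{\mathcal M}}$ onto the measure on $\paths{\mathcal M}$ induced by $\addmemory{\pi}$. Once that is done, the corollary is immediate from Theorem~\ref{theorem-safety}.
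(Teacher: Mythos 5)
Your proposal matches the paper's own derivation exactly: the corollary is obtained by combining Theorem~\ref{theorem-safety} with the probability identity stated just before it, which the paper likewise asserts without proof. Your additional sketch of that identity via cylinder sets and the definition of $P'$ is a reasonable (and more thorough) filling-in of the ``easy to see'' step, though the correspondence is a measure-preserving projection on state sequences rather than a bijection of action-decorated paths; since $\reach{\textbf u}$ depends only on the state components, this does not affect the argument.
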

We let $\setbetas{\mathcal M}$ be the set of inductive upper bounds of $\beta_{\mathcal M}$, that we equip with the norm $\infnorm{}$ such that $\infnorm{\beta_1-\beta_2}$ is the maximum of $
|\beta_1(s)-\beta_2(s)|$ for all states $s$ of $\mathcal M$.
\begin{assumption}[Slater's condition]\label{assupmtion-slater}
    There exists a policy $\pi$ in $\mathcal M$ and a number $q<p$ such that 
    $$\mathcal M_{\pi}\models \mathbb P_{\leq q}\left( \reach{\textbf u}\right).$$
\end{assumption}
\begin{theorem}[Optimality-preserving guarantees]\label{theorem-opt}
We have the three following properties.
\begin{enumerate}
    \item For any $\epsilon>0$, for any inductive $\epsilon$-upper bound $\beta$ of $\beta_{\mathcal M}$, there exists an optimal, memoryless, and deterministic policy $\pi^\star_\beta$ of $\Shielded{\mathcal M}$.
    \item The policy $\addmemory{\pi^\star_{\beta_{\mathcal M}}}$ is a solution to RCOP.
    \item If Assumption \ref{assupmtion-slater} holds, then $$\lim_{\beta\in \setbetas{\mathcal M}, \beta\rightarrow \beta_{\mathcal M}} J\left(\addmemory{\pi^\star_{\beta}}\right)=J\left(\addmemory{\pi^\star_{\beta_{\mathcal M}}}\right).$$
\end{enumerate}
\end{theorem}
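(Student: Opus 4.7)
My proof would handle the three claims in order.

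For claim (1), I would observe that $\Shieldedtwo{\mathcal M}{\beta}$ is a discounted MDP with bounded reward, and that at each state $(s,q)$, for fixed $\alpha\in X$, the inner expression of the Bellman optimality operator is linear in $v$ over the convex polytope $C^{s,q}_\alpha$, so its supremum is attained at a vertex of $V^{s,q}_\alpha$. Passing to the outer supremum over the compact cube $X$, continuity in $\alpha$ of the integrand (established by induction on the Bellman iterates using Berge's maximum theorem, the continuous dependence of $C^{s,q}_\alpha$ on $(\alpha,q)$, and the contraction of the discounted Bellman operator) yields attainment. Standard existence results for discounted Borel-state MDPs with upper-semi-continuous feasible-action correspondence then produce a memoryless deterministic optimal policy $\pi^\star_\beta$.

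For claim (2), safety of $\addmemory{\pi^\star_{\beta_{\mathcal M}}}$ is immediate from Corollary \ref{cor-safety}. For optimality, I would build, for any safe policy $\pi'$ of $\mathcal M$, a memoryless policy $\pi$ of $\Shieldedtwo{\mathcal M}{\beta_{\mathcal M}}$ with $J(\pi)=J(\pi')$. For every finite path $\zeta$ ending at $s$, define the ``remaining safety cost'' $q(\zeta)=\prob{\mathcal M,\pi'[\zeta]}{s}{\zeta'\models\reach{\textbf u}}$, where $\pi'[\zeta]$ is the residual policy of $\pi'$ after $\zeta$. By definition $q(\zeta)\geq\beta_{\mathcal M}(s)$, so $(s,q(\zeta))\in S'$; taking $\alpha(s')=q(\zeta\cdot a\cdot s')$ and $v=\pi'(\zeta)$ (viewed as a probability vector) yields a valid shield action whose coherence follows from the law of total probability. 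The induced chain in the shield projects to that of $\pi'$ in $\mathcal M$, preserving $J$. Since $\pi^\star_{\beta_{\mathcal M}}$ is optimal over memoryless shield policies by claim (1), $J(\addmemory{\pi^\star_{\beta_{\mathcal M}}})=J(\pi^\star_{\beta_{\mathcal M}})\geq J(\pi')$ for every safe $\pi'$.

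For claim (3), write $V^\star_\beta(s,q)$ for the optimal value of $\Shieldedtwo{\mathcal M}{\beta}$ from state $(s,q)$. The upper bound $V^\star_\beta(s_{init},p)\leq V^\star_{\beta_{\mathcal M}}(s_{init},p)$ follows because $\beta\geq\beta_{\mathcal M}$ gives $X_\beta\subseteq X_{\beta_{\mathcal M}}$, so any memoryless policy of $\Shieldedtwo{\mathcal M}{\beta}$ embeds verbatim into $\Shieldedtwo{\mathcal M}{\beta_{\mathcal M}}$. For the lower bound, with $\epsilon=\infnorm{\beta-\beta_{\mathcal M}}$, I would introduce an ``$\epsilon$-inflation'' embedding: any policy of $\Shieldedtwo{\mathcal M}{\beta_{\mathcal M}}$ starting at $(s_{init},p-\epsilon)$ embeds into $\Shieldedtwo{\mathcal M}{\beta}$ starting at $(s_{init},p)$ by replacing each chosen $\alpha$ with $\min(\alpha+\epsilon,1)$ componentwise. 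Feasibility holds because the new $\alpha$ dominates $\beta$ pointwise and the coherence constraint shifts by $\epsilon$ on both sides; the reward is unchanged. This gives $V^\star_\beta(s_{init},p)\geq V^\star_{\beta_{\mathcal M}}(s_{init},p-\epsilon)$. It then remains to show $\lim_{q\uparrow p} V^\star_{\beta_{\mathcal M}}(s_{init},q)=V^\star_{\beta_{\mathcal M}}(s_{init},p)$: invoke Assumption \ref{assupmtion-slater} to obtain $\pi_0$ with safety cost $q_0<p$, and mix with an RCOP-optimal $\pi^\star$ (from claim (2)) in proportions $(1-\lambda,\lambda)$; the mixed policy has safety cost at most $p-\lambda(p-q_0)$ and reward $(1-\lambda)J(\pi^\star)+\lambda J(\pi_0)$, and its lift (as in claim (2), extended to start from any $q$ no smaller than its safety cost) shows $V^\star_{\beta_{\mathcal M}}(s_{init},p-\lambda(p-q_0))\geq(1-\lambda)J(\pi^\star)+\lambda J(\pi_0)\to J(\pi^\star)$ as $\lambda\to 0$. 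Sandwiching completes the proof.

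The main obstacle is the continuity of $V^\star_{\beta_{\mathcal M}}(s_{init},\cdot)$ at $p$ from below; without Slater's condition this value function could in principle jump at $q=p$, breaking the sandwich. Slater supplies the auxiliary policy $\pi_0$ that allows us to approximate the RCOP optimum from below with a safety slack, which is precisely what the $\epsilon$-inflation embedding needs.
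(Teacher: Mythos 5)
Your treatment of claims (1) and (2) follows the paper's route: (1) is an appeal to standard existence results for discounted Borel-state models (the paper cites the lower semi-continuous model of Bertsekas--Shreve and its Corollary 9.17.2), and (2) is the same lift of an arbitrary safe policy $\pi'$ of $\mathcal M$ into $\Shieldedtwo{\mathcal M}{\beta_{\mathcal M}}$ using the residual reach-probabilities as predicted safety levels, with coherence from the law of total probability. One small technicality you gloss over (and the paper handles explicitly with its weights $\lambda_v$): the shield's actions carry a \emph{vertex} of $C^{s,q}_\alpha$, so setting ``$v=\pi'(\zeta)$'' really means decomposing $\pi'(\zeta)$ as a convex combination of vertices and randomizing over the corresponding actions; since the transition kernel is affine in $v$, this changes nothing, but it should be said. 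Your claim (3) is where you genuinely diverge, and your route is sound and considerably more modular than the paper's. The paper proves (3) by an explicit two-stage perturbation of $\pi^\star_{\beta_{\mathcal M}}$: first a policy $\pi_1$ that shifts all action-probabilities below a threshold $\omega$ onto the safest action, then a policy $\pi_2$ feasible in $\Shielded{\mathcal M}$ obtained by mixing in a safe action with carefully tuned rates $\lambda_n$ and deflating the safety levels by margins $t_n$, the whole construction governed by four auxiliary sequences $(h_n,\lambda_n,\theta_n,\eta_n)$ whose feasibility must be verified case by case. You instead factor the problem into (a) an $\epsilon$-inflation embedding showing $V^\star_\beta(s_{init},p)\geq V^\star_{\beta_{\mathcal M}}(s_{init},p-\epsilon)$ with $\epsilon=\infnorm{\beta-\beta_{\mathcal M}}$ --- which checks out, since $\min(\alpha+\epsilon,1)$ dominates $\beta$ pointwise and the coherence inequality is preserved term by term, and the old safety-level trajectory is reconstructible from the visited states because $\pi^\star_{\beta_{\mathcal M}}$ is deterministic and memoryless --- and (b) left-continuity of $q\mapsto V^\star_{\beta_{\mathcal M}}(s_{init},q)$ at $p$, obtained from Slater's condition by mixing an RCOP optimum with the strictly feasible policy and lifting the mixture. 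This isolates exactly where Slater is needed (ruling out a jump of the value in the budget at $q=p$) and replaces the paper's sequence bookkeeping with two elementary embeddings; the only step you should spell out is that the $\lambda$-mixture of two policies is realizable as a single behavioural policy of $\mathcal M$ (the standard convexity of path measures), so that its cost and reward are indeed the convex combinations you claim. What the paper's heavier construction buys in exchange is an explicit near-optimal feasible policy of $\Shielded{\mathcal M}$ together with quantitative rates, which your limit argument does not provide; for the statement as given, your proof suffices.
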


\paragraph{Discussion.} 
Definition \ref{def-shield} allows us to construct a shield from any MDP $\mathcal M$ with known safety dynamics via an algorithm that computes an inductive upper bound of $\beta_{\mathcal M}$. Theorem \ref{theorem-safety} and Corollary \ref{cor-safety} show that if we train an agent using the shield, the agent will be safe. Furthermore, Theorem \ref{theorem-opt} justifies that training an agent with the shield yields a cumulative reward close to optimal.

For the sake of simplicity, we made the choice of presenting our shielding approach in the case where the full dynamics of the MDP is known. However, every definition can be straightforwardly adapted to an MDP where only the safety dynamics, \emph{i.e.} a quotient of the MDP containing all of the safety-relevant information, is known. We make use of that adaptation in our experiments. The assumption of knowing the safety dynamics is strong, but is adopted in several papers, and in particular in the majority of shielding methods (see \cite{ABENTShielding,DBLP:conf/atal/Elsayed-AlyBAET21,DBLP:journals/corr/abs-2112-11490} for example), and could be alleviated in the future by introducing a three-step algorithm that at each iteration, learns a better conservative estimation of the safety dynamics, changes $\Shielded{\mathcal M}$ according to that estimation, and does a step of policy iteration in $\Shielded{\mathcal M}$. The size of the state and action space of the shield is bigger than the state and action space of the original MDP, and thus may lead to slower convergence than state-of-the-art Safe RL algorithms. However, the safety of the agent after computing $\beta$ is guaranteed, and the only constraint violations that may thus occur in a real-life scenario occur when computing $\beta$. This is one of the strictest guarantees possible for constraint violations in Safe RL as $\beta$ only depends on the safety dynamics of the MDP, and could be theoretically be computed with any $\epsilon$-greedy safe policy. Thus, if an $\epsilon$-greedy safe policy is known in advance and used to compute $\beta$, the algorithm incurs \emph{exactly zero} constraints violations. This strict guarantee is, to our knowledge, offered in a more scalable way compared to previous Safe RL algorithms that usually use to that end Linear Programming (as in \cite{DBLP:conf/nips/LiuZKKT21} for example).

\begin{algorithm}[tb]
\caption{Probabilistic Shielding}
\label{alg-shielding}
\begin{algorithmic}[1]
\State \textbf{Input:} An MDP $\mathcal M$, a discount factor $\gamma$, an uncertainty threshold $\epsilon$, a safety threshold $p$.
\State Compute an inductive $\epsilon$-upper bound $\beta$ of $$\beta_{\mathcal M}(s)=\max_{\pi} \text{prob}^{s}_{\mathcal M,\pi}\{\zeta \mid \zeta\models \reach{\textbf u}\}$$
\State Construct the shield $\Shielded{\mathcal M}$
\State Learn a memoryless policy $\pi^\star$ in $\Shielded{\mathcal M}$ with an RL algorithm.
\State \textbf{Return} $\addmemory{\pi^\star}$.
\end{algorithmic}
\end{algorithm}

\section{Implementation}
We suppose in the following, without any loss of generality, that for any $s\in S$, there exists an integer $d$ such that $\#A(s)=d$, and we let $\{a^s_1,\ldots,a^s_d\}$ denote $A(s)$. In a gym environment, the policy that the agent follows is output by a neural network. However, even if the sets $A(s)$ all have the same size, this does not guarantee that a probability distribution over a set $A'(s)$ (a set of actions of $\Shielded{\mathcal M}$), can be directly output by a neural network, since the sets $V^{s,q}_\alpha$ do not necessarily all have the same size, even if $s$ and $q$ are fixed. Therefore, to implement the shield as a gym environment, we change the MDP $\Shielded{\mathcal M}$ into an encoded MDP $\encoded{\mathcal M}$ that is equivalent, \emph{i.e} such that every policy of $\encoded{\mathcal M}$ can be transformed into a policy of $\Shielded{\mathcal M}$ and the converse. To avoid instability, the MDP $\encoded{\mathcal M}$ is constructed so that the dependency of the probabilistic transition function on the state-action pair is as continuous as possible. The results obtained show that this approach scales well. For the sake of simplicity, we do not define $\encoded{\mathcal M}$ entirely, but we give in the following the main technical idea of $\encoded{\mathcal M}$, which is a way of mapping the set $V_\alpha^{s,q}$ to a larger set of fixed size, so that the dependency of the probabilistic transition function on the state-action pair is roughly continuous. We give such a mapping $g$ below.

Formally, $g$ associates to any $(s,q,\alpha,i,j)$ such that $s\in S$, $q\in [\beta(s);1]$, $\alpha\in X$, $i,j\in\{1,\ldots,d\}$, and $V^{s,q}_\alpha$ is nonempty, an element of $V^{s,q}_\alpha$. Intuitively, if we let $\chi^s_i$ denote the element of $[0;1]^{A(s)}$ such that ${\chi^s_i}(a)=1$ if $a=a^s_i$ and $0$ otherwise, $g(s,q,\alpha,i,j)$ corresponds to the intersection between the border of the half-space $A_\alpha^{s,q}$ and the line between $\chi^s_i$ and $\chi^s_j$ if there is one, to $\chi^s_i$ if $\chi^s_i$ is in $A_\alpha^{s,q}$, or to a means of the points in $V_\alpha^{s,q}$ weighted by the minimum of their distances to $\chi^s_i$ and $\chi^s_j$ otherwise. A formal definition is given below.

\begin{itemize}
    \item If $i=j$,
    \begin{itemize}
        \item if $\chi^s_i\in A^{s,q}_\alpha$, $g(s,q,\alpha,i,j)=\chi^s_i$,
        \item otherwise $$g(s,q,\alpha,i,j)=\frac{\sum_{v\in V^{s,q}_\alpha}\frac{1}{\lVert \chi^s_i-v\rVert}v}{\sum_{v\in V^{s,q}_\alpha}\frac{1}{\lVert \chi^s_i-v\rVert}},$$
        \end{itemize}
    \item Otherwise, if $i \neq j$,
    \begin{itemize}
        \item if $\chi^s_i\in A^{s,q}_\alpha$, $g(s,q,\alpha,i,j)=\chi^s_i$,
        \item otherwise if $\chi^s_j\in A^{s,q}_\alpha$, $g(s,q,\alpha,i,j)$ is defined as $\lambda_{max}\chi^s_i+(1-\lambda_{max})\chi^s_j$ where $\lambda_{max}$ is the maximal $\lambda\in [0;1]$ such that $\lambda\chi^s_i+(1-\lambda)\chi^s_j\in A^{s,q}_\alpha$ (notice that $g(s,q,\alpha,i,j)\in V_\alpha^{s,q}$ in that case),
        \item and otherwise $$g(s,q,\alpha,i,j)=\frac{\sum_{v\in V^{s,q}_\alpha}\frac{1}{\min(\lVert \chi^s_i-v\rVert,\lVert \chi^s_j-v\rVert)}v}{\sum_{v\in V^{s,q}_\alpha}\frac{1}{\min(\lVert \chi^s_i-v\rVert,\lVert \chi^s_j-v\rVert)}}.$$
    \end{itemize}
    \end{itemize}
Since the convex polytope $C_\alpha^{s,q}$ whose set of vertices is $V_\alpha^{s,q}$ is the intersection of the polytope $\chi$ whose set of vertices is $(\chi_i)_{i\in \{1,\ldots,d\}}$, and of the half-space $A_\alpha^{s,q}$, it is easy to see that the elements of $V_\alpha^{s,q}$ are all on the edges of $\chi$. As a consequence, we have $$V^{s,q}_\alpha\subseteq \{g(s,q,\alpha,i,j)\mid i,j\in\{1,\ldots,d\}\} $$ for any $(s,q)\in S'$ and any $\alpha\in X$, such that $V^{s,q}_\alpha$ is nonempty.

\begin{figure}[!b]
    \begin{subfigure}[t]{0.23\textwidth}
        \centering
        \includegraphics{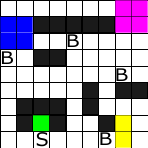}
        \caption{Colour bomb v1}
        \label{fig:9x9colourbomb}
    \end{subfigure}
    \begin{subfigure}[t]{0.23\textwidth}
        \centering
    \includegraphics{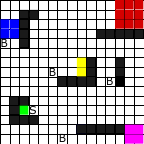}
        \caption{Colour bomb v2}
        \label{fig:15x15colourbomb}
    \end{subfigure}
    \begin{subfigure}[t]{0.23\textwidth}
        \centering
        \includegraphics{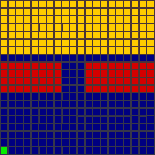}
    \caption{Bridge v1}
    \label{fig:bridgecrossing}
    \end{subfigure}
    \begin{subfigure}[t]{0.23\textwidth}
        \centering
        \includegraphics{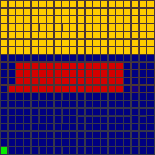}
    \caption{Bridge v2}
    \label{fig:bridgecrossingv2}
    \end{subfigure}
    \caption{Gridworld Environments}
    \label{fig:envs}
\end{figure}

{
\begin{table*}
\begin{tabular}{|l|l|l|l|l|l|l|}
\hline
& \textbf{Media ...} & \textbf{Colour bomb} & \textbf{Color bomb v2} & \textbf{Bridge} & \textbf{Bridge v2} & \textbf{Pacman} \\
\hline
\texttt{random\_action\_probability} & - & 0.1 & 0.1 & 0.04 & 0.04 & -\\
\hline
\texttt{episode\_length} & 40 & 100 & 250 & 600 & 600 & 1000\\
\hline
\texttt{total\_timesteps} & 25k & 25k & 100k & 200k & 200k & 500k\\
\hline
\texttt{safety\_bound} & 0.001 & 0.05 & 0.05 & 0.01 & 0.01 & 0.01\\
\hline
\texttt{action\_space\_size} & 2 & 4 & 4 & 4 & 4 & 5\\
\hline
\texttt{state\_space\_size} &462 & 81 & 900 & 400 & 400 & \~100k\\
\hline
\end{tabular}
\caption{Environment Parameters}
\label{tab:envparams}
\label{table1}
\end{table*}
}
\section{Experiments}
We demonstrate the viability of our approach with four case studies. The algorithm used to compute an inductive $\epsilon$-upper bound of $\beta_{\mathcal M}$ is Interval Iteration \cite{HADDAD2018111}, which is simple in our case as the end components of the MDPs corresponding to the environments are trivial. We use PPO \cite{SPPO} as an RL algorithm to find an optimal policy in the shield. We demonstrate the viability of our approach with five case studies of increasing complexity. For each case study, we compare the safety and the cumulative reward given at each epoch by unshielded PPO \cite{SPPO}, PPO-shield (our approach), PPO-Lagrangian \cite{Achiam2019BenchmarkingSE}, a combination of a lagrangian approach and PPO, and CPO \cite{DBLP:conf/icml/AchiamHTA17}. We use Omnisafe \cite{ji2023omnisafeinfrastructureacceleratingsafe} for the implementation of PPO-Lagrangian and CPO.

\subsection{Environment descriptions}

We provide descriptions for each of our testing environments below. For the gridworld environments, the agent has access to four actions in every state (except for the terminal one), which are $\{\textit{left}, \textit{right}, \textit{up}, \textit{down}\}$. Every action carries a probability \texttt{random\_action\_probability} of choosing randomly, in a uniform manner, another direction. For example, the action $\textit{left}$ makes the agent go left with probability $1-\texttt{random\_action\_probability}$, and the agent goes right, up, and down with remaining probability $\texttt{random\_action\_probability}/3$. Furthermore, safety in all the environments is defined as avoiding the unsafe states with probability at least $1-\texttt{safety\_bound}$.

Table~\ref{tab:envparams} details the parameters of each of our environments including this random probability, the maximum episode length, the total number of interactions (or timesteps) and the safety bound. We also provide illustrations of the relevant environments in Figure~\ref{fig:envs}.

\paragraph{Media streaming}
The agent is tasked with managing a data buffer. The data buffer has size $20$, with packets leaving the data buffer according to a Bernoulli process with rate $\mu_{\textit{out}} = 0.7$. The agent has two actions $A = \{\textit{fast}, \textit{slow}\}$ which fill the data buffer with new packets according to a Bernoulli process with rates $\mu_{\textit{fast}}=0.9$ and $\mu_{\textit{slow}} = 0.1$ respectively. The goal is to minimise the outage time: if the data buffer is empty, the agent receives a reward of $-1$ and $0$ otherwise. The state space is augmented with a cost $c$ which corresponds to the number of times the action \textit{fast} is used. The unsafe states are all the states corresponding to a total number of \textit{fast} actions used above the threshold $ C = \lfloor\texttt{episode\_length}/2\rfloor$. Thus, the agent must avoid using more that $C$ \textit{fast} actions with high probability. A similar environment has been considered in \cite{DBLP:conf/nips/BuraHKSC22}.
\paragraph{Colour bomb gridworld v1}
The agent operates in a $9\times9$ gridworld (see Fig.~\ref{fig:9x9colourbomb}). Upon reaching a coloured zone that is yellow, blue or pink, the agent receives a reward of $+1$ and the episode terminates. Alternatively, when reaching the green or red zones, the agent can choose either to stay inside of them, or to go to any white square that borders. All other states provide a reward of $0$. The unsafe states are the bombs labelled as $B$ states ($S$ denotes the starting state). A similar environment has been used in \cite{ABENTShielding}, albeit with a hard safety constraint instead of a probabilistic one.

\paragraph{Colour bomb gridworld v2}

The agent operates in a $15\times15$ gridworld (see Fig.~\ref{fig:15x15colourbomb}), similar to the previous environment. However, in contrast to the previous environment, the non-green coloured zones that give a reward of $+1$ and terminate the episode are randomised, either at the start of an episode or when the agent enters the green zone.

\paragraph{Bridge crossing (v1 and v2)}
The agent operates in a $20\times20$ gridworld (see Fig.~\ref{fig:bridgecrossing}). The goal is to cross the bridge to the safe terminal yellow states, which provide a reward of $+1$. The unsafe states are the red states (lava), and the agent must thus avoid falling in lava with high probability. The start state is denoted by the green square. Bridge crossing v1 has been used in \cite{DBLP:conf/aaai/MittaH0KKA24}.

\begin{figure}[!t]
    \centering
    \begin{subfigure}[t]{0.47\textwidth}
        \centering
        \includegraphics{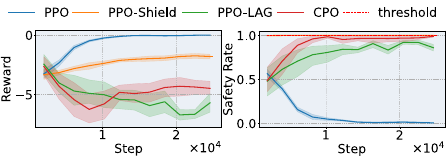}
        \caption{Media streaming}
        \label{fig:mediastreamingresults}
    \end{subfigure}
    \hfill
    \begin{subfigure}[t]{0.47\textwidth}
        \centering
        \includegraphics{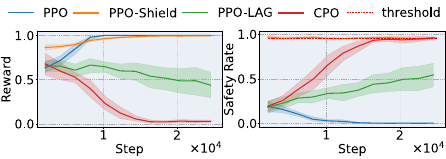}
        \caption{Colour bomb gridworld v1}
        \label{fig:colourbombresults}
    \end{subfigure}
    \hfill
    \begin{subfigure}[t]{0.47\textwidth}
        \centering
        \includegraphics{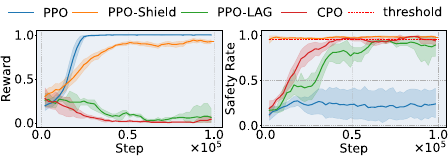}
        \caption{Colour bomb gridworld v2}
        \label{fig:colourbombv2results}
    \end{subfigure}
    \hfill
    \begin{subfigure}[t]{0.47\textwidth}
        \centering
        \includegraphics{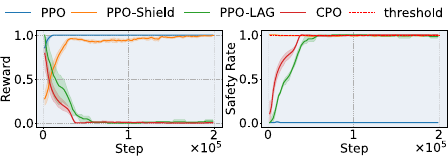}
        \caption{Bridge crossing v1}
        \label{fig:bridgecrossingresults}
    \end{subfigure}
    \hfill
    \begin{subfigure}[t]{0.47\textwidth}
        \centering
        \includegraphics{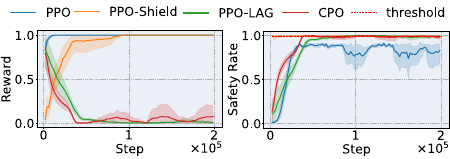}
        \caption{Bridge crossing v2}
        \label{fig:bridgecrossingv2results}
    \end{subfigure}
    \hfill
    \begin{subfigure}[t]{0.47\textwidth}
    \centering
    \includegraphics{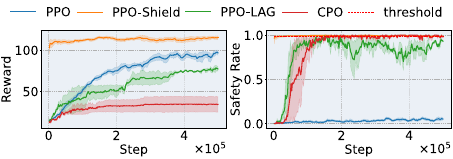}
        \caption{Pacman}
        \label{fig:pacmanresults}
    \end{subfigure}
    \hfill
    \caption{Learning curves}
    \label{fig:learningcurves}
\end{figure}

\paragraph{Pacman} We also consider a $15\times 19$ pacman environment inspired by \cite{racaniere2017imagination}, with one ghost, and collectible coins (+1 reward) in every position (no food). Taking in to consideration all possible locations and directions of the ghost and the agent, and the locations of the coins the state space is combinatorially large, although similar to \cite{ABENTShielding}, we can leverage a \emph{safety abstraction} of the environment (ignoring the dynamics of the coins) for efficient interval iteration. We note that even with the safety abstraction the total number of states exceeds 100k, demonstrating that our approach is still feasible for large state spaces. The goal is to collect as many coins while avoiding the (unsafe) ghosts for the duration of the episode.

\subsection{Results}

Figure \ref{fig:learningcurves} presents the results of our experiments. In every environment, PPO-shield does indeed guarantee safety throughout training and at test time. In terms of cumulative reward, PPO-shield converges to the expected value of $1$ (or almost $1$) in the Colour bomb gridworld v1 and v2, and Bridge crossing v1 and v2 environments. Furthermore, in the media streaming environment, where there is a trade-off between safety and reward, PPO-shield still improves to an expected strictly negative value. In terms of rate of convergence, PPO-shield converges slightly slower than PPO in every environment except for the Bridge Crossing environments where PPO-shield converges significantly slower and PPO converges immediately. This can be explained by the fact that, if not considering safety, the optimal path for the agent in the Bridge Crossing environments is to go straight up, whereas if considering safety, to get an optimal reward, the agent has to find the path across the bridge in Bridge Crossing v1, and the path that goes around the lava to the right in Bridge Crossing v2, correctly evaluating that the straight-up path is too risky. Thus, in these cases, safety is very restrictive, which may explain the longer convergence time. Overall, the rate of convergence of PPO-shield remains fast, requiring a maximum of 100 000 steps in all of our case studies.

We can also see that PPO-shield significantly outperforms CPO and PPO-Lagrangian in all of the case studies. Even though CPO and PPO-Lagrangian both seem to learn the constraint correctly, neither of them manages to optimize the reward in every single one of our case studies. This might be due to the fact that these algorithms are 
slow to converge when the safety requirement is very restrictive.

\section{Conclusion}
We have developed a shielding approach for Safe RL with probabilistic state-avoidance constraints. We have shown that this approach is theoretically sound, and offers strict safety guarantees. Furthermore, this approach relies on Value Iteration on the safety dynamics of the MDP, which is known to be scalable, and allows to decouple the safety dynamics and the reward dynamics of the MDP, in contrast to Safe RL approaches based on Linear Programming. In addition, our experiments show that our method is viable in practice and can significantly outperform state-of-the-art Safe RL algorithms.

\section{Acknowledgements}
This paper was supported by the EPSRC grant number EP/X015823/1, titled "An abstraction-based technique for Safe Reinforcement Learning".
\bibliography{aaai25}
\newpage
\appendix
\section{Appendix}
\setcounter{theorem}{0}
\setcounter{assumption}{0}
In the following, we let $\mathbf F\mathbf u$ denote $\reach{\mathbf u}$.
\subsection{Proof of Theorem 1}
\begin{theorem}[Safety guarantee in any shield]
    For any memoryless policy $\pi$ in $\Shielded{\mathcal M}$, we have $$\Shielded{\mathcal M}_{\pi}\models \mathbb P_{\leq p}\left( \textbf F\textbf u\right).$$
\end{theorem}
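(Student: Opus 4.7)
The goal is to show that in the induced Markov chain $\Shielded{\mathcal M}_{\pi}$ the probability of reaching an unsafe state, starting from $(s_{init},p)$, is at most $p$. Following the approach of Theorem 10.15 in \cite{baier2008principles}, the reachability probability $f_\pi:(s,q)\mapsto \mbox{prob}^{(s,q)}_{\Shielded{\mathcal M},\pi}\{\zeta\mid \zeta\models \mathbf F\mathbf u\}$ is the least fixed point of the natural Bellman-style system of equations for reachability in a Markov chain. Hence it suffices to exhibit a \emph{super-solution}, i.e.\ a function $g:S'\to [0;1]$ satisfying $g(s,q)\geq 1$ when $L(s)=\mathbf u$ and $g(s,q)\geq (P'_\pi g)(s,q)$ otherwise, and to conclude $f_\pi\leq g$ pointwise. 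The key idea is that the ``safety level'' $q$ attached to each state is itself such a super-solution: I will take $g(s,q)=q$, and then the theorem follows from $g(s_{init},p)=p$.

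\textbf{Key steps.} First, I would handle the unsafe case: if $L(s)=\mathbf u$ then $\beta_{\mathcal M}(s)=1$, so since $\beta$ is an inductive $\epsilon$-upper bound and probabilities lie in $[0;1]$ we have $\beta(s)=1$; as $q\in[\beta(s);1]$, necessarily $q=1$, so $g(s,q)=1\geq 1$ as required. Second, for the safe case $L(s)\neq \mathbf u$, I would unfold the definition of $P'$ to write
\[
(P'_\pi g)(s,q)=\sum_{(\alpha,v)\in\mathrm{supp}(\pi(s,q))}\pi(s,q)(\alpha,v)\sum_{s'\in S}\Bigl(\sum_{a\in A(s)} v_a P(s,a,s')\Bigr)\alpha(s').
\]
Because $(\alpha,v)\in A'(s,q)$, we have $v\in V^{s,q}_\alpha\subseteq A^{s,q}_\alpha$, which is exactly the inequality
\[
\sum_{a\in A(s)} v_a\sum_{s'\in S} P(s,a,s')\,\alpha(s')\leq q\sum_{a\in A(s)} v_a = q.
\]
Averaging this bound against $\pi(s,q)$ gives $(P'_\pi g)(s,q)\leq q=g(s,q)$, completing the super-solution property.

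\textbf{Concluding step and main obstacle.} Invoking the characterisation of reachability probabilities as the least fixed point of the Bellman operator (Theorem 10.15 of \cite{baier2008principles}), we obtain $f_\pi(s,q)\leq g(s,q)=q$ for every $(s,q)\in S'$; specialising to the initial state yields $f_\pi(s_{init},p)\leq p$, which is exactly $\Shielded{\mathcal M}_\pi\models \mathbb P_{\leq p}(\mathbf F\mathbf u)$. The delicate point is the local inequality for the safe case: it relies crucially on the fact that $v$ lies in the half-space $A^{s,q}_\alpha$, which encodes precisely Condition~\ref{condition-beta} of the shield's design. I expect the main obstacle to be a clean bookkeeping of the two parameters $\alpha$ (the predicted next safety levels) and $v$ (the vertex of $C^{s,q}_\alpha$), making sure that the predicted level $\alpha(s')$ appearing in $g(s',\alpha(s'))=\alpha(s')$ is the same scalar that appears in the defining inequality of $A^{s,q}_\alpha$; once this alignment is made explicit, the computation is a one-line linear-algebra identity and the fixed-point argument delivers the result. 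Memorylessness of $\pi$ is used only to guarantee that $f_\pi$ satisfies the Bellman equation at each $(s,q)$, since otherwise the super-solution function would have to depend on the history as well.
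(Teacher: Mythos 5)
Your proof is correct and follows essentially the same route as the paper: both arguments rest on showing that $g(s,q)=q$ is a super-solution (pre-fixed point) of the reachability Bellman operator --- which is exactly the content of the half-space constraint $v\in A^{s,q}_\alpha$ together with the observation that $q=1$ on unsafe states --- and then bounding the least fixed point by it. The only difference is presentational: because $S'$ is uncountable, the paper re-derives the least-fixed-point characterisation by hand (via the monotone iterates $\omega^n$ and Borel-measurable integrals) rather than citing the finite-state Theorem 10.15 of Baier and Katoen directly, but your super-solution computation is the identical core step.
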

\begin{proof}
    We fix 
    a memoryless policy $\pi$ in $\Shielded{\mathcal M}$. We let $T'$ be the set of all states $(s,q)$ of $\Shielded{\mathcal M}$ labeled by $c$, and we let $S'_{?}=S'\setminus T'$. Furthermore, we let $P'_\pi$ be the mapping from $S'$ to the space of probability measures over $S'$ such that $P'_\pi$ is the transition function of the Markov chain $\Shielded{\mathcal M}_\pi$, \emph{i.e.}, such that $$P'_\pi(s,q)(E)=\int_{a\in A'(s,q)}P'((s,q),a)(E)d\pi((s,q))(a).$$
    
    Further, we let $\textbf A$ be the operator over the set of Borel-measurable mappings from $S'_{?}$ to $[0;1]$ such that $$(\textbf A f)(s,q)=\int_{(s',q')\in S'_{?}} f(s',q') d P'_\pi((s,q))((s',q'))$$ 
    and we let $\textbf b$ be the mapping that associates every $(s,q)\in S'_{?}$ to $P'_\pi((s,q))(T')$, and let $\mathbf{\Gamma}$ be the operator such that $\mathbf{\Gamma}(f)=\textbf A(f)+\textbf b$ for any Borel-measurable mapping $f$ from $S'_?$ to $[0;1]$. In addition, we let $\omega^0(s,q)=0$ for any $(s,q)\in S'_{?}$ and $\omega^{n+1}=\mathbf{\Gamma}(\omega^n)$. 
    
    We show, by induction on $n$, that \begin{equation}\omega^n(s,q)=\text{prob}^{s,q}_{\Shielded{\mathcal M},\pi}(E^{s,q}_n),\label{omega-equation} \end{equation} where $E^{s,q}_n$ is the set of all infinite paths $\zeta=\zeta_0a_0\cdots\zeta_n a_n\cdots$ of $\Shielded{\mathcal M}$ such that $\zeta_0=(s,q)$ and 
    there exists $i\in\{0,\ldots,n\}
    $ with $\zeta_i\in T'$.
    \begin{itemize}
        \item If $n=0$, $E_n$ is the set of all states labeled by $c$, and (\ref{omega-equation}) follows.
        \item Suppose now that for some $n\in\mathbb N$, for any $(s,q)\in S'_?$, we have $\omega^n(s,q)=\text{prob}^{s,q}_{\Shielded{\mathcal M},\pi}(E^{s,q}_n)$. The probability of a path $\zeta=\zeta_0a_0\cdots$ such that $\zeta_0=(s,q)$ being in $E^{s,q}_{n+1}$ (according to probability measure $\text{prob}^{s,q}_{\Shielded{\mathcal M},\pi}$) is the sum of the probability of $\zeta_1a_1\cdots$ being in $E^{\zeta_1}_n$, and of the probability of $\zeta_1$ being in $T'$. However, the probability of $\zeta_1 a_1\cdots\in E^{\zeta_1}_n$ is equal to $(\textbf A\omega^n)(s,q)$ by the induction hypothesis, and the probability of $\zeta_1\in T'$ is equal to $\textbf b(s,q)$ by definition of $\textbf b$. Therefore,  (\ref{omega-equation}) follows.
    \end{itemize}

    Let us now define the relation $\leq$ over mappings $f,g$ from $S'_{?}$ to $[0;1]$ such that $f\leq g$ iff $f(s,q)\leq g(s,q)$ for every $(s,q)\in S'_{?}$, and let $\eta(s,q)=q$ for any $(s,q)\in S'_{?}$. We have that $\mathbf{\Gamma}(\eta)(s,q)$ is equal to $\int_{(s',q')\in S'} q' dP_\pi((s,q))((s',q'))$, which is the same as $$\int_{a\in A'(s,q)}\int_{(s',q')\in S'} q' dP((s,q),a)((s',q'))d\pi((s,q))(a).$$ Thus, since $$\int_{(s',q')\in S'} q' dP((s,q),a)((s',q'))\leq q$$ for any $a\in A(s)$ by definition of $\Shielded{\mathcal M}$, we have $\mathbf{\Gamma}(\eta)\leq \eta$. Furthermore, since integrating preserves the relation $\leq$, then the operator $\mathbf{\Gamma}$ is increasing for $\leq$. Consequently, since $\omega^0\leq \eta$ and since $\mathbf{\Gamma}(\omega^{n})=\omega^{n+1}$, an immediate induction gives $\omega^n(s,q)\leq \eta$ for any $n\in\mathbb N$. Furthermore, from (\ref{omega-equation}), since $\bigcup_{n\in\mathbb N}E_n^{s,q}$ is equal to $$\{\zeta=\zeta_0\cdots\in\paths{\Shielded{\mathcal{M}}}\mid \zeta_0=(s,q)\land\zeta\models \textbf F \mathbf u\}$$ and it is a countable union, we have that for any $(s,q)\in S'_{?}$, $\omega^n(s,q)$ converges to $$\prob{\Shielded{\mathcal M},\pi}{(s,q)}{\zeta \in \paths{\Shielded{\mathcal M}}\mid \zeta\models \mathbf F\mathbf u}$$ as $n$ tends to infinity. The result follows.
\end{proof}
\subsection{Proof of Theorem \ref{theorem-opt}}
\begin{assumption}[Slater's condition]
    There exists a policy $\pi$ in $\mathcal M$ and a number $q<p$ such that 
    $$\mathcal M_{\pi}\models \mathbb P_{\leq q}\left( \textbf F\textbf u\right).$$
\end{assumption}
\begin{theorem}[Optimality-preserving guarantees]
We have the three following properties.
\begin{enumerate}
    \item For any $\epsilon>0$, for any inductive $\epsilon$-upper bound $\beta$ of $\beta_{\mathcal M}$, there exists an optimal, memoryless, and deterministic policy $\pi^\star_\beta$ of $\Shielded{\mathcal M}$.
    \item The policy $\addmemory{\pi^\star_{\beta_{\mathcal M}}}$ is a solution to RCOP.
    \item If Assumption \ref{assupmtion-slater} holds, if $\pi^\star$ is a solution to RCOP, then $$\lim_{\beta\in \setbetas{\mathcal M}, \beta\rightarrow \beta_{\mathcal M}} J\left(\addmemory{\pi^\star_{\beta}}\right)=J\left(\addmemory{\pi^\star_{\beta_{\mathcal M}}}\right).$$
\end{enumerate}
\end{theorem}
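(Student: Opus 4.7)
The plan is to prove the three parts in order. Parts 1 and 2 rely on fairly standard discounted Borel-MDP theory and a policy-lifting argument respectively; Part 3 requires a Slater-based perturbation argument, which I expect to be the main obstacle.

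For Part 1, I would observe that $\Shielded{\mathcal M}$ is a Borel MDP with bounded rewards and discount $\gamma<1$, so the Bellman optimality operator is a sup-norm contraction, yielding a unique bounded value function $V^\star$. For each state $(s,q)\in S'$, the set $A'(s,q)$ embeds into the compact Euclidean set $X\times\chi^s$, and the Bellman right-hand side $(\alpha,v)\mapsto\sum_{a\in A(s)} v_a\sum_{s'\in S} P(s,a,s') V^\star(s',\alpha(s'))$ is continuous in $(\alpha,v)$. Berge's maximum theorem then supplies an upper-hemicontinuous argmax correspondence with nonempty compact values, and the Kuratowski--Ryll-Nardzewski selection theorem produces a measurable selector. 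Because each shield action $(\alpha,v)$ already fixes a single vertex $v$ of the polytope $C^{s,q}_\alpha$, this selector is deterministic as an $\Shielded{\mathcal M}$-action and defines $\pi^\star_\beta$. Standard Bellman-optimality arguments for discounted Borel MDPs further ensure $\pi^\star_\beta$ is optimal among all (possibly history-dependent) policies of $\Shielded{\mathcal M}$.

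For Part 2, I would lift any safe policy $\pi$ of $\mathcal M$ to a history-dependent policy $\pi'$ of $\Shielded{\mathcal M}$ (built from $\beta=\beta_{\mathcal M}$) with $J(\pi')=J(\pi)$. Along any history $\zeta_t=s_0 a_0\cdots s_t$ of $\mathcal M_\pi$, let $q_t$ be the conditional probability under $\pi$ of reaching an unsafe state from $s_t$ after $\zeta_t$. By the minimality of $\beta_{\mathcal M}$, $q_t\geq\beta_{\mathcal M}(s_t)$, so $(s_t,q_t)\in S'$; setting $\alpha(s')=q_{t+1}$ at each successor $s'$, the law of total probability gives $\pi(\zeta_t)\in C^{s_t,q_t}_\alpha$, and Caratheodory decomposes $\pi(\zeta_t)$ as a convex combination of vertices in $V^{s_t,q_t}_\alpha$, which defines $\pi'$ as a randomization over shield actions. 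The induced state-reward processes in $\mathcal M_\pi$ and $\Shielded{\mathcal M}_{\pi'}$ coincide, so $J(\pi')=J(\pi)$. By Part 1, $J(\pi^\star_{\beta_{\mathcal M}})\geq J(\pi')=J(\pi)$; combined with Corollary \ref{cor-safety}, this shows $\addmemory{\pi^\star_{\beta_{\mathcal M}}}$ solves RCOP.

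For Part 3, the upper bound $J(\addmemory{\pi^\star_\beta})\leq J(\addmemory{\pi^\star_{\beta_{\mathcal M}}})$ follows at once from Corollary \ref{cor-safety} and Part 2. For the matching lower bound as $\beta\to\beta_{\mathcal M}$, I would use Slater's condition to construct strictly feasible perturbations of $\addmemory{\pi^\star_{\beta_{\mathcal M}}}$ that lift into $\Shielded{\mathcal M}$ built from nearby $\beta$. Let $\pi_S$ be the Slater policy with safety $\leq q<p$, and for $\lambda\in(0,1)$ form the step-by-step mixture $\pi^\lambda=(1-\lambda)\addmemory{\pi^\star_{\beta_{\mathcal M}}}+\lambda\pi_S$. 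By monotonicity of unsafe-reachability in the action distribution and the strict slack $p-q>0$, I would show that the conditional unsafe-reachability $q^\lambda_t$ of $\pi^\lambda$ exceeds $\beta_{\mathcal M}(s_t)$ at every reachable state by a uniform positive gap $c(\lambda)$, depending only on $\lambda$, $p-q$, and the MDP, with $c(\lambda)\to 0$ as $\lambda\to 0$. For $\beta$ with $\infnorm{\beta-\beta_{\mathcal M}}\leq c(\lambda)$, the Part 2 lifting then places $\pi^\lambda$ inside $\Shielded{\mathcal M}$ built from $\beta$, so $J(\addmemory{\pi^\star_\beta})\geq J(\pi^\lambda)$; since $J(\pi^\lambda)\to J(\addmemory{\pi^\star_{\beta_{\mathcal M}}})$ as $\lambda\to 0$ by continuity of $J$ in the policy, coupling $\lambda$ to $\infnorm{\beta-\beta_{\mathcal M}}$ yields the desired limit. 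The main obstacle is establishing the uniform slack $c(\lambda)>0$: this amounts to showing Slater's strict feasibility propagates uniformly along trajectories, which I would handle by back-propagating the slack through the Bellman equation for the reachability objective under $\pi^\lambda$ and invoking the finiteness of $S$.
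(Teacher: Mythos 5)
Your Parts 1 and 2 follow essentially the paper's own route: Part 1 is the standard existence result for discounted Borel MDPs with compact-valued action sets (the paper cites the lower semi-continuous model of Bertsekas--Shreve where you invoke Berge plus a measurable selector), and Part 2 is exactly the paper's lifting $\pi\mapsto\Shielding{\pi}$ into $\Shieldedtwo{\mathcal M}{\beta_{\mathcal M}}$, declaring the conditional unsafe-reachability probabilities as safety levels and decomposing $\pi(\zeta)$ over the vertices of $C^{s,q}_\alpha$ via Carath\'eodory. Those two parts are sound (modulo the same unaddressed technicality as in the paper, namely that the vertex correspondence $\alpha\mapsto V^{s,q}_\alpha$ is not closed).

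Part 3 contains a genuine gap, in two places. First, the ``step-by-step mixture'' $\pi^\lambda=(1-\lambda)\addmemory{\pi^\star_{\beta_{\mathcal M}}}+\lambda\pi_S$ need not be feasible at all: undiscounted reachability is multilinear, not convex, in the per-state action distributions, so the value of a per-step mixture is not sandwiched between the component values, and over an infinite horizon the mismatch accumulates. Concretely, take a hub $s_0$ with $a\to s_a$, $b\to s_b$, where from $s_a$ action $a$ returns to $s_0$ and action $b$ is fatal, and symmetrically from $s_b$; then ``always $a$'' and ``always $b$'' both have reachability $0$, while every per-step mixture dies with probability $2\lambda(1-\lambda)>0$ on each revisit and hence has reachability $1$. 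Second, even for a correctly formed (trajectory-level) mixture, the claimed uniform gap $q^\lambda_t\geq\beta_{\mathcal M}(s_t)+c(\lambda)$ at every reachable state is false: in a closed safe region every continuation policy, including $\pi_S$'s, has conditional reachability exactly $\beta_{\mathcal M}(s)=0$, so no amount of mixing creates a gap, and the state $(s,0)$ simply does not belong to $\Shielded{\mathcal M}$ once $\beta(s)>0$. The repair cannot go through the true conditional reachabilities; one must instead \emph{declare} inflated safety levels (at least $\beta$) and then re-establish, step by step, the budget inequality $\sum_{a}v_a\bigl(q-\sum_{s'}P(s,a,s')\alpha(s')\bigr)\geq 0$ that defines $C^{s,q}_\alpha$. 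That rebalancing is precisely where the paper's proof spends its effort: it first shifts small action weights of $\pi^\star_{\beta_{\mathcal M}}$ onto a locally $\beta_{\mathcal M}$-minimising action ($\pi_1$), then mixes in a locally safe action with a time-indexed weight $\lambda_n$ while inflating the declared levels toward $\beta_{\mathcal M}+\delta_{min}\omega/8$, and propagates the resulting slack $t_n$ through a recursion on $(h_n,\lambda_n,\theta_n)$ up to a horizon $M$. You correctly identify the uniform-slack propagation as the crux, but the mechanism you propose does not deliver it.
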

\begin{proof}
We show the three properties.

\begin{enumerate}
    \item Since $\Shielded{\mathcal M}$ satisfies the conditions for the lower semi-continuous model (Definition 8.7 of \cite{BSSstochastic}), there exists a memoryless, deterministic, and optimal policy $\pi^\star_\beta$ of $\Shielded{\mathcal M}$ (Corollary 9.17.2 of \cite{BSSstochastic}).
    \item For any policy $\pi$ of $\mathcal M$ such that $$\prob{\mathcal M,\pi}{s_{init}}{\zeta \in \paths{\mathcal M}\mid \zeta\models \textbf F\textbf u} \leq p,$$ we let $\Shielding{\pi}$ be the policy of $\Shieldedtwo{\mathcal{M}}{\beta_{\mathcal M}}$ composed of $\pi$ and of predicting ``safety levels" equal to the probabilities of reaching an unsafe state in $\mathcal M_\pi$, \emph{i.e.}, the policy such that for any path $\zeta=(s_0,q_0)\cdots(s_n,q_n)$ of $\Shieldedtwo{\mathcal{M}}{\beta_{\mathcal M}}$ with $$q_n= \prob{\mathcal M,\pi}{s_n}{\zeta \in \paths{\mathcal M}\mid \zeta\models \textbf F\textbf u},$$ $\Shielding{\pi}(\zeta)$ is defined as $\sum_{v\in V_\alpha^{s_n,q_n}} \lambda_v (\alpha,v)$
 where $$\alpha(s)=\prob{\mathcal M,\pi}{s}{\zeta \in \paths{\mathcal M}\mid \zeta\models \textbf F\textbf u}$$
 and where the $(\lambda_v)_{v\in V_\alpha^{s_n,q_n}}\in [0;1]^{V_\alpha^{s_n,q_n}}$ are such that
 \begin{eqnarray*}
 \sum_{v\in V_\alpha^{s_n,q_n}} \lambda_v=1 &
\text{and} &
 \sum_{v\in V_\alpha^{s_n,q_n}} \lambda_v v=\pi(\zeta).
 \end{eqnarray*}
For any policy $\pi$ of $\mathcal M$, since $\pi_{\beta_{\mathcal{M}}}^\star$ is optimal for $\Shieldedtwo{\mathcal{M}}{\beta_{\mathcal M}}$, we have $J(\pi_{\beta_{\mathcal{M}}}^\star)\geq J(\Shielding{\pi})$. 
Moreover, $J(\pi^\star_{\beta_{\mathcal M}})=J\left(\addmemory{\pi^\star_{\beta_{\mathcal M}}}\right)$, and by definition of the reward function of $\Shieldedtwo{\mathcal{M}}{\beta_{\mathcal M}}$, we have $J(\Shielding{\pi})=J(\pi)$. Thus, we have that $J\left(\addmemory{\pi^\star_{\beta_{\mathcal M}}}\right)\geq J(\pi)$ for any safe policy $\pi$ of $\mathcal M$.

 \item Let $\epsilon>0$. Since $J(\widehat{\pi})=J(\pi)$ for any memoryless $\pi$ policy of $\Shielded{\mathcal M}$, we show that there exists $\eta>0$ such that if $\infnorm{\beta-\beta_{\mathcal M}}\leq\eta$, then $\left|J\left(\pi^\star_{\beta}\right)-J\left(\pi^\star_{\beta_{\mathcal M}}\right)\right|\leq\epsilon$. The fact that $J\left(\pi^\star_{\beta_{\mathcal M}}\right)\geq J\left( \pi^\star_{\beta} \right)$ comes from the definition of $J\left(\pi^\star_{\beta}\right)$. Therefore, it remains to find $\eta>0$ such that, if $\infnorm{\beta-\beta_{\mathcal M}}<\eta$, then $$J\left(\pi^\star_{\beta_{\mathcal M}}\right)\leq J\left( \pi^\star_{\beta} \right)+\epsilon.$$
 We suppose without loss of generality that $\#A(s)=d$ for all states $s$ of $\mathcal M$. 
 
 We first define another policy $\pi_1$ of $\Shielded{\mathcal M}$ such that $$J\left(\pi^\star_{\beta_{\mathcal M}}\right)\leq J\left( \pi_1 \right)+\frac{\epsilon}{3}.$$ We let $\omega=\frac{\epsilon(1-\gamma)^2}{3(d-1)(r_{max}-r_{min})}$, and for any state $s$ of $\mathcal M$, for any mapping $\alpha$ from $S$ to $[0;1]$, we let $a^\alpha$ be the action in $A(s)$ such that $$\sum_{s'\in S} \alpha(s')P(s,a^\alpha,s')=\min_{a\in A(s)} \sum_{s'\in S} \alpha(s')P(s,a,s').$$
 We define $\pi_1$ as a policy such that, for any $(s,q)$, $\pi_1(s,q)=\sum_{v\in V^{s,q}_\alpha }\lambda_v\delta_{(\alpha,v)}$ such that $\delta_{(\alpha,v^\star)}=\pi^\star_{\beta_{\mathcal M}}(s,q)$, and such that $v'=\sum_{v\in V^{s,q}_\alpha }\lambda_v v$ satisfies
 $$\left\{ \begin{array}{cc}
      v'_a=v^\star_a+\sum_{a'\in\{a'\mid v^\star_{a'}\leq\omega\}} v^\star_{a'}& \text{ if } a=a^\alpha \\
      v'_a=0 & \text{ if } v^\star_a\leq\omega \\
      v'_a=v^\star_a & \text{otherwise.}
 \end{array} \right.$$
 Notice that 
 \begin{multline*}
 \sum_{a\in A(s)} v'_a\left(\sum_{s'\in S}P(s,a,s')\alpha(s')\right)\leq\\ \sum_{a\in A(s)} v_a\left(\sum_{s'\in S}P(s,a,s')\alpha(s')\right)+\\\sum_{a\in\{a\mid v^s_a\leq \omega\}} v_a\Bigg(\left(\sum_{s'\in S}P(s,a,s')\alpha(s')\right)-\\ \left(\sum_{s'\in S}P(s,a^\alpha,s')\alpha(s')\right)\Bigg), 
 \end{multline*}
 which is less than or equal to $q$ by definition $a^\alpha$ and since $v\in V^{s,q}_\alpha$. Thus, $v'\in V^{s,q}_\alpha$ and the $\lambda_v$ in the definition of $\pi_1$ are well-defined. Since for any $(s,q)$, the probability measures $\pi_1(s,q)$ and $\pi^\star_{\beta_{\mathcal M}}(s,q)$ select the same actions with probability at least $1-\omega(d-1)$, we have \begin{multline*}
     J(\pi_1)\geq J(\pi^\star_{\beta_{\mathcal M}})-\\\sum_{t\in\mathbb N} \gamma^{t+1}\sum_{t'\in\mathbb N}\gamma^{t'}\omega(d-1)(r_{max}-r_{min})\\=\frac{\gamma\omega(d-1)(r_{max}-r_{min})}{(1-\gamma)^2}=\frac{\epsilon}{3}.
 \end{multline*}
 
We now a define policy $\pi_2$ of such that $J(\pi_2)\geq J(\pi_1)-\frac{2\epsilon}{3}$. We let \begin{equation}
    \lambda=\frac{\epsilon(1-\gamma)^2}{3(r_{max}-r_{min})}\label{equation-lambda}
\end{equation} and we let $M\in\mathbb N$ be such that \begin{equation}
    \frac{\gamma^M}{1-\gamma}(r_{max}-r_{min})\leq \frac{\epsilon}{3}.\label{equation-M}
\end{equation} 
Furthermore, we let $E$ be the (finite) set of all $(s,q)$ such that there exists a path $(s_0,q_0) a_0\cdots (s_n,q_n)$ in $\Shieldedtwo{\mathcal M}{\beta_{\mathcal M}}_{\pi_1}$ such that $n<M$, $(s_0,q_0)=(s_{init},p)$ and $(s,q)=(s_n,q_n)$, and we let $\delta_{min}$ be the minimum of all the $q-\beta_{\mathcal M}(s)$ such that $(s,q)\in E$ and $q-\beta_{\mathcal M}(s)$, which exists because of Assumption \ref{assupmtion-slater}. In addition, we let $h_0,\ldots,h_M$, $\lambda_0,\ldots,\lambda_M$, $\theta_0,\ldots \theta_M$, and $\eta_0,\ldots,\eta_M$ be four non-decreasing sequences such that, for any $n<M$
\begin{gather}
    h_0=\theta_0=\epsilon_0=0\\
    \lambda_0>0\label{eq-lambda0}\\
    h_{n+1}= h_n+\frac{2}{\omega} \frac{\lambda_n}{\lambda_{n+1}}\label{eq-hn}\\
    \frac{\lambda_n}{\lambda_{n+1}}\leq \frac{\delta_{min}\omega}{32}\label{eq-ratiolambda}\\
    h_M\leq \frac{\delta_{min}}{4}\label{eq-hM}\\
    \lambda_{M}\leq \min\left\{\frac{1}{4},\lambda\right\}\label{lambdan-to-lambda}\\
    \theta_n=(h_{n+1}-h_{n})\lambda_{n+1}
\end{gather}
and we let \begin{equation}\eta=\min\left\{\theta_1\frac{\delta_{min}\omega}{8},\frac{\lambda_1\omega\delta_{min}^2}{32}\right\}.\label{def-eta}\end{equation} It is easy to check that such four sequences exist, as we only need to take $\lambda_0$ and $\frac{\lambda_n}{\lambda_{n+1}}$ sufficiently small, and the fact that $\eta>0$ comes from (\ref{eq-lambda0}). For any $s\in S$, for any $n\leq M$, we also let $\theta_n(s,q)$ be the number equal to $\min \left\{\frac{\lambda_n\delta_{min}}{4},\theta_n\right\}$  if  $$\sum_{a\in A(s)}v^\star_a\sum_{s'\in S} P(s,a,s')(\alpha^\star(s')-\beta_{\mathcal M}(s'))\leq \frac{\delta_{min}\omega}{4}$$ 
   and equal to $\theta_n$ otherwise, where $\{(\alpha^\star,v^\star)\}$ is the support of the Dirac distribution $\pi_1(s,q)$.
Finally, we let $\beta$ be such that $\infnorm{\beta-\beta_{\mathcal M}}\leq \eta$, and we let $\pi_2$ be a policy  of $\Shielded{\mathcal M}$ such that, for any path $\zeta=(s_0,q_0) a_0 \cdots (s_n,q_n)$ of $\Shieldedtwo{\mathcal M}{\beta_{\mathcal M}}_{\pi_1}$ with $(s_0,q_0)=(s_{init},p)$ and $n\leq M-1$, if we let $(\alpha^\star,v^\star)$ denote the support of the Dirac distribution $\pi_1(s_n,q_n)$, $a_{safe}$ denote an action of $A(s)$ such that $$\sum_{s'\in S}P(s,a_{safe},s')\beta_{\mathcal M}(s')=\beta_{\mathcal M}(s),$$ 
and $t_i$ denote $$\left(q_i-\beta_{\mathcal M}(s_i)-\frac{\delta_{min}\omega}{8}\right)\theta_i(s_i,q_i),$$
\begin{itemize}
    \item if there exists $i\leq n$ such that $q_i=0$, then if $j$ is the minimal integer $i$ that has this property, we have 
    \begin{multline}
        \pi_2\bigg((s_0,q_0)\\ a_0 (s_1,q_1-t_1) \cdots   a_{j-2}(s_{j-1},q_{j-1}-t_{j-1}) \\a_{j-1} (s_j,q_j+\epsilon)\cdots a_{n-1}(s_n,q_n+\epsilon)\Bigg)=\\(v^\star,\alpha^\star+\epsilon)\label{pi2-def1}
    \end{multline}
    \item otherwise, \begin{multline}
\pi_2\Bigg((s_0,q_0)\\ a_0 (s_1,q_1-t_1) \cdots  a_{n-2}\left(s_{n-1},q_{n-1}-t_{n-1}\right) \\ a_{n-1}\left(s_n,q_n-t_n\right)\Bigg)=\\\Big(\lambda_n \chi_{a_{safe}}+(1-\lambda_n) v^\star,\\ \theta_n(s_n,q_n)\left(\beta_{\mathcal M}+\frac{\delta_{min}\omega}{8}\right)+(1-\theta_n(s_n,q_n))\alpha^\star\Big).\label{pi2-def2}
\end{multline}
\end{itemize}
Notice that the ``safety levels" output by $\pi_2$ in (\ref{pi2-def1}) and (\ref{pi2-def2}) are above $\beta_{\mathcal M}+\eta$ by (\ref{def-eta}), the fact that $\theta$ is non-decreasing, and the definition of $\theta_n(s,q)$. The fact that $J(\pi_2)\geq J(\pi_1)-\frac{2\epsilon}{3}$ comes from the fact if $n<M$ and $(s_0,q_0)=(s_{init},p)$, the distributions $\pi_2((s_0,q_0)a_0\cdots (s_n,q_n))$ and $\pi_1(s_n,q_n)$ are the same on a set of measure $1-\lambda_{n}$ by definition of $\pi_2$, from the fact that $\lambda_n$ is non-decreasing, from (\ref{lambdan-to-lambda}), from (\ref{equation-lambda}), and from (\ref{equation-M}). 

It remains to show that $\pi_2$ is well-defined as a policy of $\Shielded{\mathcal M}$, \emph{i.e.}
 that for any finite path $\zeta=(s_0,q_0) a_0\cdots (s_n,q_n)$ of $\Shieldedtwo{\mathcal M}{\beta_{\mathcal M}}_{\pi_1}$ with $n<M$ and $q_i>\beta_{\mathcal M}(s_i)$ for any $0\leq i\leq n$, if we let $\{(\alpha^\star,v^\star)\}$ be the support of the Dirac distribution $\pi
_1(s_n,q_n)$, if we let $a_{safe}$ be the action such that $\beta_{\mathcal M}(s_n)=\sum_{s'\in S} P(s,a_{safe},s') \beta_{\mathcal M}(s')$, and if we let $P(s_n,a^\star,s')=\sum_{a\in A(s)}v^\star_a P(s_n,a,s')$, we have 
 \begin{multline}
     (1-\lambda_n)\sum_{s'\in S} \Bigg[q_n-P(s_n,a^\star,s')\\\Bigg( \theta_n(s_n,q_n)\left(\beta_{\mathcal M}(s')+\frac{\delta_{min}\omega}{8}\right)+ \\(1-\theta_n(s_n,q_n))\alpha^\star\Bigg)\Bigg]+\\\lambda_n\Bigg[q_n-\sum_{s'\in S} P(s_n,a_{safe},s') \\\Bigg( \theta_n(s_n,q_n)\left(\beta_{\mathcal M}(s')+\frac{\delta_{min}\omega}{8}\right)+\\(1-\theta_n(s_n,q_n))\alpha^\star\Bigg)\Bigg]\leq q_n-t_n.\label{eq-pi2-inshield}
 \end{multline}
To show (\ref{eq-pi2-inshield}), we first transform (\ref{eq-pi2-inshield}) as the following inequation that implies (\ref{eq-pi2-inshield})
 \begin{multline}
     (1-\lambda_n)\theta_n(s_n,q_n)\sum_{s'\in S} \Bigg[P(s_n,a^\star,s') \\\Bigg(\alpha^\star(s')-\beta_{\mathcal M}(s')-\frac{\delta_{min}\omega}{8}\Bigg)\Bigg]+\\
     \lambda_n(1-\theta_n(s_n,q_n))\sum_{s'\in S} \Bigg[P(s_n,a_{safe},s')\\ \left(\beta_{\mathcal M}(s')+\delta_n-\alpha^\star(s')\right)\Bigg]\\
     -t_n\lambda_n \geq 0,\label{eq-pi2-inshield-transformed}
 \end{multline}
 where $\delta_n=q_n-\beta_{\mathcal M}(s_n)$.
 
It thus remains to show (\ref{eq-pi2-inshield-transformed}), and to do so, we distinguish the two following cases.
\begin{itemize}
    \item Suppose that $$\sum_{s'\in S} P(s,a^\star,s')(\alpha^\star(s')-\beta_{\mathcal M}(s'))\leq \frac{\delta_{min}\omega}{4}.$$
    Then by definition of $\pi_1$, we have that for all $s'\in S$, $|\alpha^\star(s')-\beta_{\mathcal M}(s')|\leq \frac{\delta_{min}}{4}$. Therefore, we have \begin{multline}
     (1-\lambda_n)\theta_n(s_n,q_n)\sum_{s'\in S} \Bigg[P(s_n,a^\star,s') \\\Bigg(\alpha^\star(s')-\beta_{\mathcal M}(s')-\frac{\delta_{min}\omega}{8}\Bigg)\Bigg]\\\geq -\theta_n(s_n,q_n)\frac{\delta_{min}\omega}{8}\\\geq -\frac{\lambda_n\delta_{min}^2\omega}{16},\label{eq-17}
     \end{multline}
     \begin{multline}
         \lambda_n(1-\theta_n(s_n,q_n))\sum_{s'\in S} \Bigg[P(s_n,a_{safe},s')\\ \left(\beta_{\mathcal M}(s')+\delta_n-\alpha^\star(s')\right)\Bigg]\geq \lambda_n\frac{3\delta_{min}}{8}\label{eq-18}
     \end{multline}
     and \begin{equation}-t_n\lambda_n\geq -\frac{\lambda_n^2\delta_{min}}{4}.\label{eq-tnlambdan}\end{equation} Equation (\ref{eq-pi2-inshield-transformed}) is thus a consequence of (\ref{eq-17}), (\ref{eq-18}), (\ref{eq-tnlambdan}) and (\ref{lambdan-to-lambda}).
     \item Suppose now that $$\sum_{s'\in S} P(s,a^\star,s')(\alpha^\star(s')-\beta_{\mathcal M}(s'))= \frac{K\delta_{min}\omega}{4},$$ with $K>1$. Then, by definition of $\pi_1$, we have that for all $s'\in S$, $|\alpha^\star(s')-\beta_{\mathcal M}(s')|\leq \frac{K\delta_{min}}{4}$. Therefore, we have from (\ref{eq-hn}) and (\ref{eq-ratiolambda})
     \begin{multline}
     (1-\lambda_n)\theta_n(s_n,q_n)\sum_{s'\in S} \Bigg[P(s_n,a^\star,s') \\\Bigg(\alpha^\star(s')-\beta_{\mathcal M}(s')-\frac{\delta_{min}\omega}{8}\Bigg)\Bigg]\\\geq \frac{(h_{n+1}-h_n)\lambda_{n+1}}{2}\delta_{min}\omega\left(\frac{K}{4}-\frac{1}{8}\right)\\\geq \lambda_n\delta_{min}\left(\frac{K}{4}-\frac{1}{8}\right)
     \label{eq-17(2)}
     \end{multline}
     \begin{multline}
         \lambda_n(1-\theta_n(s_n,q_n))\sum_{s'\in S} \Bigg[P(s_n,a_{safe},s')\\ \left(\beta_{\mathcal M}(s')+\delta_n-\alpha^\star(s')\right)\Bigg]\geq -\lambda_n\delta_{min}\left(\frac{K}{4}-\frac{1}{4}\right)\label{eq-18(2)}
     \end{multline}
     and from (\ref{eq-hM})
     \begin{multline}
         t_n\lambda_n\geq -(h_{n+1}-h_n)\lambda_{n+1}\lambda_n\\\geq -\frac{\delta_{min}\lambda_n}{32}\label{eq-19(2)}
     \end{multline}
     Equation (\ref{eq-pi2-inshield-transformed}) is thus a consequence of (\ref{eq-17(2)}), (\ref{eq-18(2)}), (\ref{eq-19(2)}).
\end{itemize}
\end{enumerate}
\end{proof}

\subsection{Additional experiments}
We ran additional experiments to compare our approach with PPO-Lagrangian \cite{Achiam2019BenchmarkingSE} and CPO \cite{DBLP:conf/icml/AchiamHTA17}. Since for small cost limits these algorithms seem to struggle, we changed the parameter \texttt{safety\_bound} of our case studies to \texttt{0.5}. Due to compute constraints, results are averaged over 3 independent runs (rather than the usual 10). For the environments Bridge Crossing v1 and v2, Colour Bomb v2 and Media Steaming v2, PPO-Lagrangian and CPO fail to converge within a million steps. Figure~\ref{fig:additionalexperiments} presents the results for the environments Colour Bomb v1 and Media Streaming.
\begin{figure}[!h]
    \centering
    \hfill
    \begin{subfigure}[t]{0.5\textwidth}
        \centering
        \includegraphics[width=0.45\linewidth]{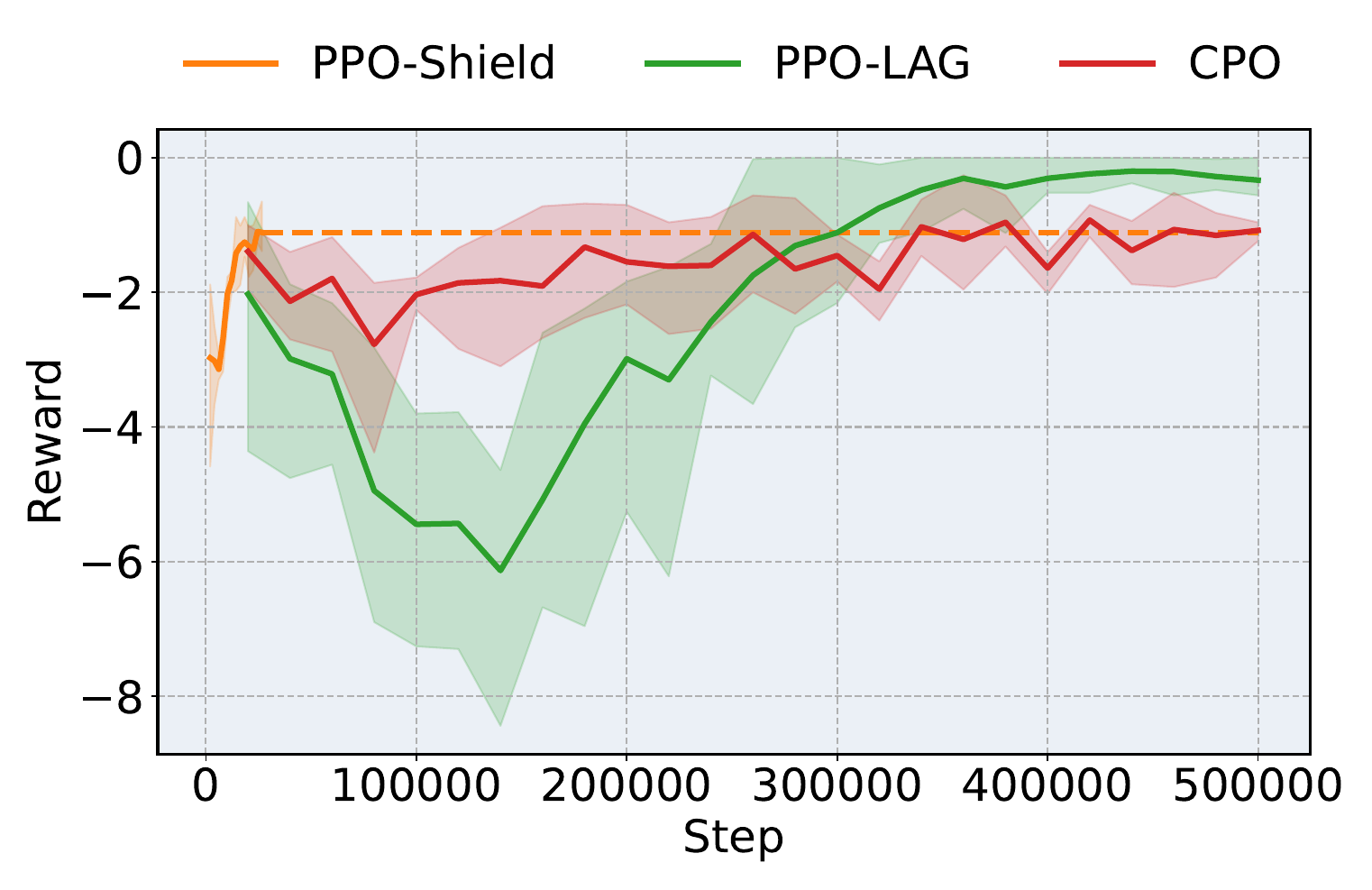}
        \includegraphics[width=0.45\linewidth]{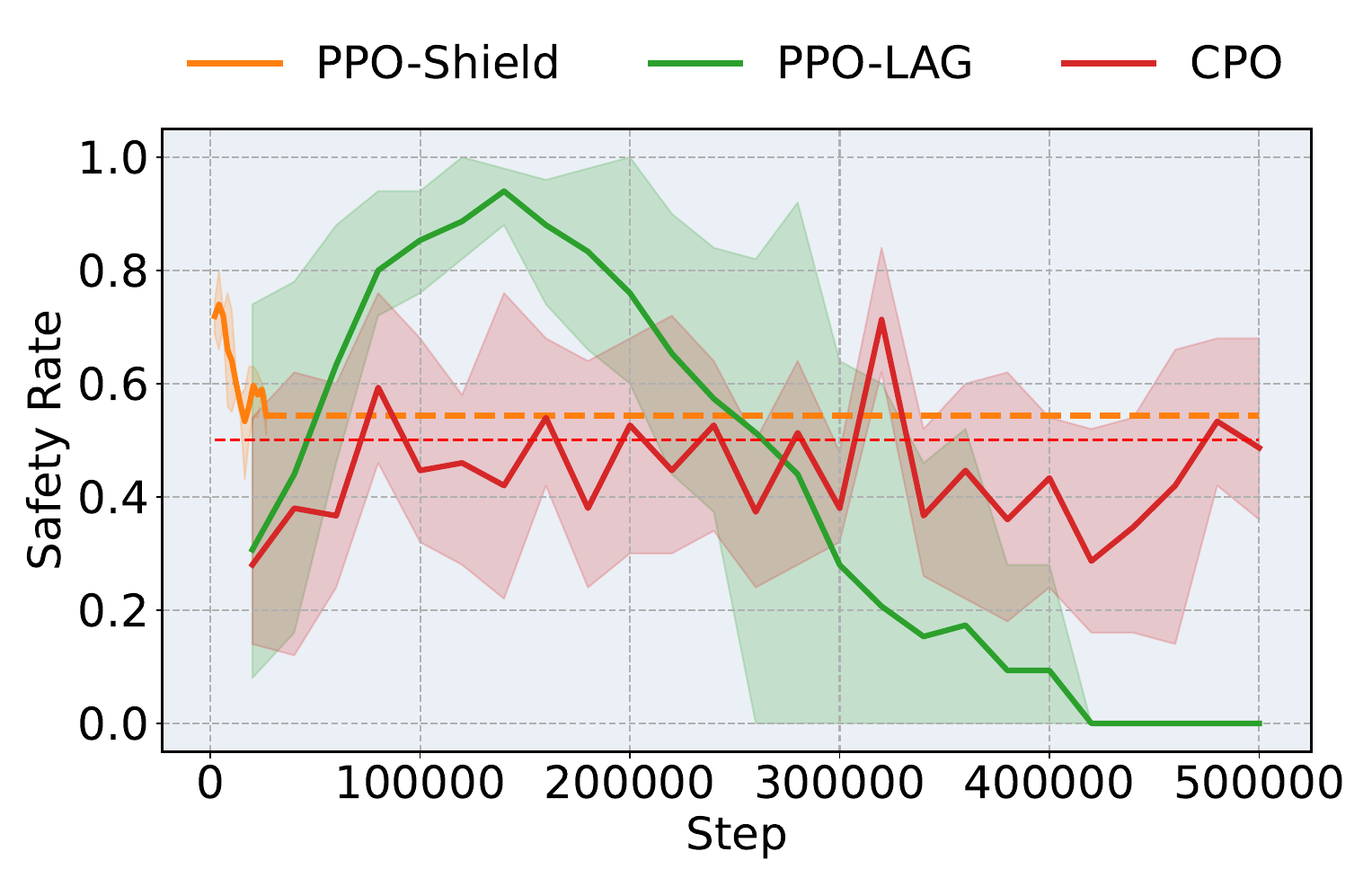}
        \caption{Media streaming}
        \label{fig:mediastreamingresults}
    \end{subfigure}
    \hfill
    \begin{subfigure}[t]{0.5\textwidth}
        \centering
        \includegraphics[width=0.45\linewidth]{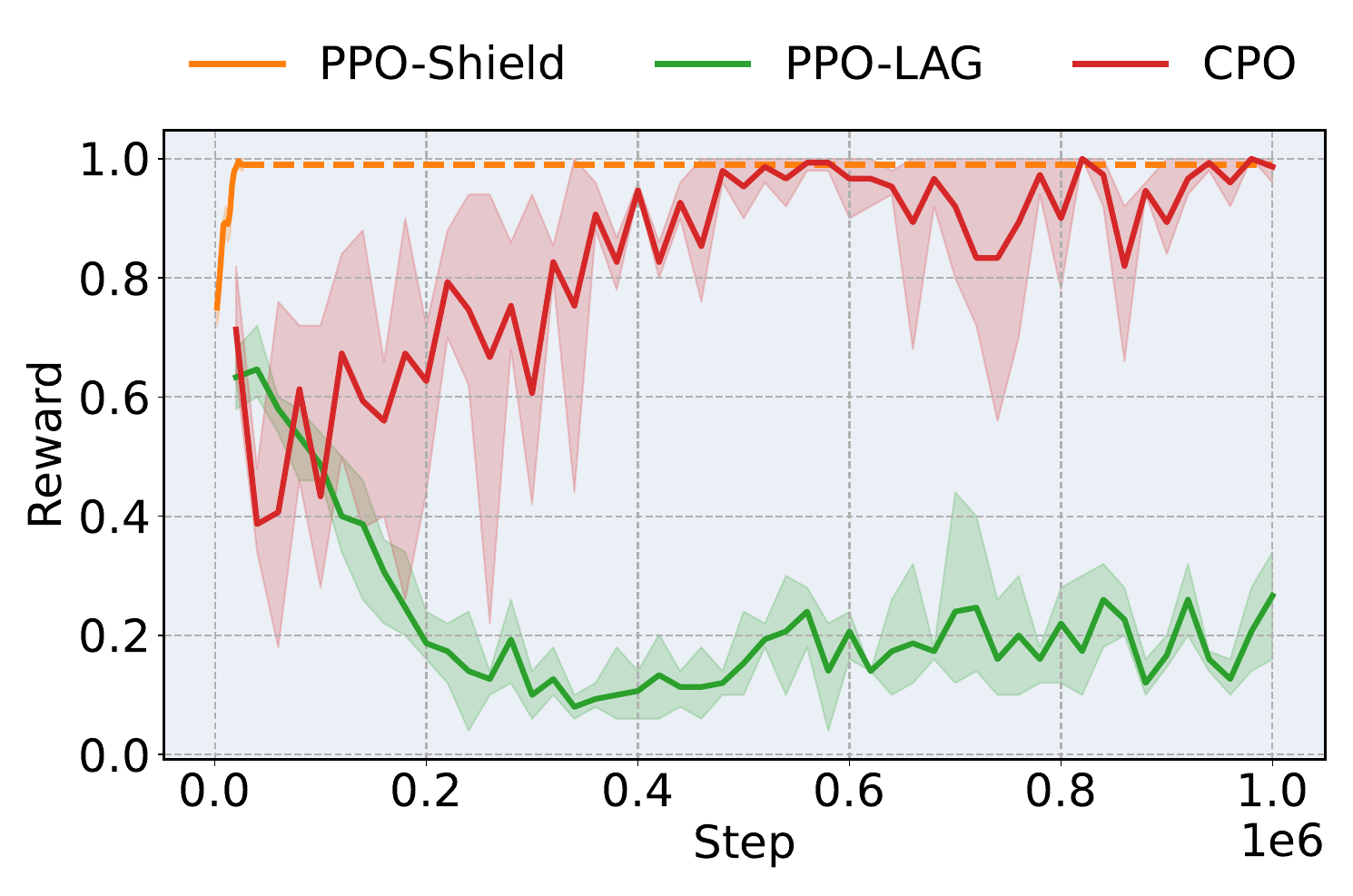}
        \includegraphics[width=0.45\linewidth]{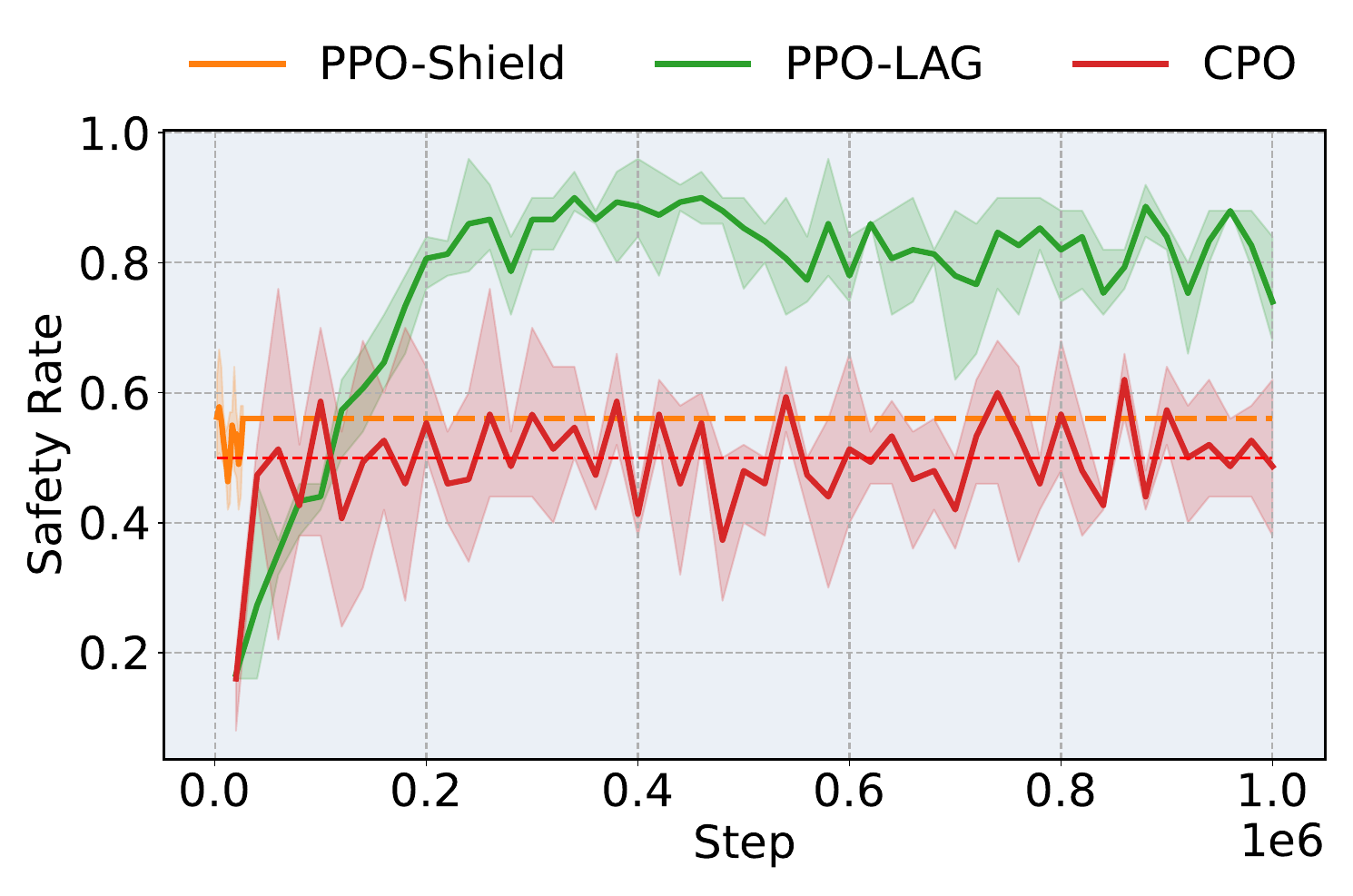}
        \caption{Colour bomb gridworld v1}
        \label{fig:colourbombresults}
    \end{subfigure}
    \hfill
    \caption{Learning curves for additional experiments}
    \label{fig:additionalexperiments}
\end{figure}

For the Media Streaming and the Colour Bomb v1 environments, we can see that CPO converges to the optimal policy roughly within the cost limit of 0.5. However, it converges much more slowly than PPO-Shield, even though these environments are quite simple. Unfortunately, PPO-Lag fails to converge in either environment, likely due to slow convergence of the dual variable.

\subsection{Hyperparameters}
For our implementation of PPO and PPO-Shield we used the default hyperparameters provided by stable baselines3 \cite{stable-baselines3}: \texttt{lr = 0.0003}, \texttt{n\_steps=2048}, \texttt{batch\_size=64}, \texttt{n\_epochs=10}, \texttt{gae\_lambda=0.95}, \texttt{clip=0.2}, \texttt{max\_grad\_norm=0.5}, \texttt{ent\_coef=0.0} and \texttt{vf\_coef=0.5}.

For PPO-Lagrangian and CPO (main paper) we used comparable hyperparameters where applicable: \texttt{lr = 0.0003}, \texttt{n\_steps=2048}, \texttt{batch\_size=64}, \texttt{n\_epochs=10}, \texttt{gae\_=0.95}, \texttt{gae\_\_cost=0.95}, \texttt{clip=0.2}, \texttt{max\_grad\_norm=0.5} and \texttt{ent\_coef=0.0}. For the different environments in the main paper, we used \texttt{cost\_limit=0.05} for colour bomb (v2), \texttt{cost\_limit=0.01} for bridge crossing (v2) and \texttt{cost\_limit=0.01} for media streaming which correspond to the \texttt{safety\_bounds} used for each environment.

For the additional experiments we updated some of the hyperparameters for longer run training (for PPO-Lagrangian and CPO): \texttt{n\_steps=20000}, \texttt{batch\_size=128}, \texttt{n\_epochs=40}, \texttt{max\_grad\_norm=40.0}. Finally, in all experiments for PPO-Lagranian we set the \texttt{lagrangian\_multiplier\_init=10.0}.

\end{document}